\documentclass{article}
\usepackage[utf8]{inputenc}
% \PassOptionsToPackage{numbers, compress}{natbib}

\usepackage[margin=1in]{geometry}
\usepackage{graphicx} % more modern
\usepackage{subfigure}  
\usepackage{wrapfig}
\usepackage{url}
\usepackage{caption}
\usepackage{amsmath,mathtools,amsfonts,amssymb,amsthm}
% For algorithms
\usepackage[linesnumbered,ruled,vlined]{algorithm2e}

\usepackage{enumitem}

% \definecolor{green}{HTML}{009B55}
\usepackage{hyperref}
\hypersetup{
    colorlinks=true,
    linkcolor=blue,
    anchorcolor = blue,
    citecolor = blue,
    filecolor=magenta,      
    urlcolor=cyan
    }
    
\allowdisplaybreaks
% Packages hyperref and algorithmic misbehave sometimes.  We can fix
% this with the following command.

%\renewcommand{\paragraph}[1]{\noindent\textbf{#1}} 

\usepackage{fullpage}
\usepackage{natbib} 
\usepackage[normalem]{ulem}

\usepackage{mathtools}
\usepackage{booktabs}
\usepackage{Definitions}
\usepackage{xcolor}
\usepackage[capitalize,noabbrev]{cleveref}
\usepackage{makecell}
\newcommand{\fancyname}{\textsf{LSD}} %$\mathsf{SD^2}$

%\textsuperscript{1}
\title{Beyond Expectations: \\ Learning with Stochastic Dominance Made Practical}
\author{Shicong Cen\thanks{Carnegie Mellon University; emails: \texttt{\{shicongc,yuejiec\}@andrew.cmu.edu}. }  \\
 Carnegie Mellon   
	\and Jincheng Mei\thanks{Google Research; emails: \texttt{\{jcmei,hadai,schuurmans,bodai\}@google.com}.} \\
 Google 
	\and Hanjun Dai\footnotemark[2] \\
 Google 
 \and
 Dale Schuurmans\footnotemark[2] \\
 Google \& University of Alberta \and  Yuejie Chi\footnotemark[1]\\
 Carnegie Mellon  
	\and Bo Dai\footnotemark[2] \\
 Google \& Georgia Tech }  
\date{\today}

% \icmltitlerunning{Learning with Stochastic Dominance Made Practical}

\begin{document}

% \icmltitle{{Learning with Stochastic Dominance Made Practical}}
\maketitle

\begin{abstract}
Stochastic dominance serves as a general framework for modeling a broad spectrum of decision preferences under uncertainty, with risk aversion as one notable example, as it naturally captures the intrinsic structure of the underlying uncertainty, in contrast to simply resorting to the expectations. Despite theoretical appeal, the application of stochastic dominance in machine learning has been scarce, due to the following challenges: {\bf i)}, the original concept of stochastic dominance only provides a \emph{partial order}, and therefore, is not amenable to serve as a general optimality criterion; and {\bf ii)}, an efficient computational recipe remains lacking due to the continuum nature of evaluating stochastic dominance. 

    In this work, we make the first attempt towards establishing a general framework of learning with stochastic dominance. We first generalize the stochastic dominance concept to enable feasible comparisons between any arbitrary pair of random variables. We next develop a simple and computationally efficient approach for finding the optimal solution in terms of stochastic dominance, which can be seamlessly plugged into many learning tasks. Numerical experiments demonstrate that the proposed method achieves comparable performance as standard risk-neutral strategies and obtains better trade-offs against risk across a variety of applications including supervised learning, reinforcement learning, and portfolio optimization. 
\end{abstract}

\section{Introduction}\label{sec:intro}

% importance in dealing with uncertainty
% risk control
 In machine learning and operations research, the prevalent paradigm of decision making in the presence of uncertain and stochastic outcomes is to maximize (resp.\ minimize) the expected utility (resp.\ loss) with respect to the decision variables. However, the expectation of the decision-dependent utility function {\em alone} often depicts an overly simplified snapshot of its {\em distribution}, ignoring the intrinsic structure of the underlying uncertainty. As such, it fails to distinguish decisions with the same expected utilities but drastically different outcomes or model behaviors, especially when taking {\em risk} into consideration. 
 
 There are no shortage of \textit{risk-sensitive} applications where taming the risk is at least as important as maximizing the utility, examples including financial planning, medical examinations, robotics and autonomous systems, to mention a few. In these high-stake applications, the principle of expectation may lead to inferior decisions due to its uncertainty-agnostic nature. To motivate our discussions, we showcase three distinct applications where risk-averse solutions are of particular interest, which will run throughout this paper. 

\begin{itemize}%[leftmargin=*, topsep=1pt, itemsep=0pt]
    \item \textit{Risk-sensitive supervised learning.} In standard supervised learning, one aims to find an optimally parameterized model such that the expected loss, which measures the difference between the model output and the target output given an input feature, is minimized. However, excessive prediction errors, even with extreme low probability, can pose significant risk to the system operation which may be undesirable.% \shic{blabla}
    \item \textit{Risk-sensitive reinforcement learning.} Reinforcement learning (RL) formulates sequential decision making problems as Markov decision processes (MDPs). The goal is to design an action selection rule, i.e., a policy, which maximizes the expected cumulative reward collected over the trajectories by executing the policy. Nonetheless, risky actions, if not discouraged properly, might still be deployed when they are compensated by high reward in the long horizon.  
    \item \textit{Portfolio optimization.} A leading example in finance planning is the selection between mutually exclusive investment opportunities or portfolios with uncertain returns. The allocation of the assets yields a random variable representing the total return. Maximizing the expected total return alone can lead to large volatility and increases the probability of suffering significant loss.%``the big short''!
\end{itemize}

%\vspace{-2mm}
\subsection{Learning with Stochastic Dominance}
% mean-risk
%\vspace{-1mm}

One needs to go {\em beyond expectations} to handle risk, a topic that has been extensively researched in many disciplines. Broadly, existing approaches can be categorized into two paradigms. The first relies on a scalar metric that induces a total order over outcomes, thereby allowing every pair of random variables to be compared. The mean–risk \citep{markowitz2000mean} framework is a notable example: it quantifies the problem with two metrics: a \textit{mean} that measures the expected outcome, and  a \textit{risk} that measures the variability of outcomes. Popular choices of risk measures include variance \citep{markowitz2000mean}, semideviation \citep{ogryczak1999stochastic}, entropic risk \citep{rudloff2008entropic}, and so on. The mean-risk approach models risk-averse preferences by penalizing the mean with the risk measure and allows simple trade-offs and efficient learning algorithms~\citep{maurer2009empirical,duchi2019variance}. However, the design choices of the risk measure and corresponding trade-offs are usually ad-hoc, lacking rigorous justifications. In contrast, the second paradigm leverages partial orders to capture more fine-grained preference structures, among which stochastic dominance~\citep{mann1947test} is the most prominent.

\paragraph{Stochastic dominance.} Stochastic dominance (SD) \citep{mann1947test,lehmann2011ordered} provides a  principled scheme of comparing real-valued random variables by considering the \emph{full spectrum} of their $k$th-order cumulative distribution functions, instead of condensing into a single scalar metric. In fact, efficient solutions found by the mean-risk approach can be stochastically dominated by other feasible solutions \citep{ogryczak2002dual}, suggesting SD offers stronger guidance and finer granularity in modeling the risk-averse preference. 
 In addition, the deployment of SD for comparing random variables does not need additional assumptions on the distribution (e.g., the mean-variance approach requires normality \citep{levy2015stochastic}). 
% Stochastic dominance induces a partial order over the family of random variables and provides

% utility
Another nice justification of SD comes from expected utility theory~\citep{boutilier2006constraint,armbruster2015decision}.
% An alternative view of SD rules stems from expected utility theory. 
Specifically, if one solution stochastically dominates the other, it yields higher expected utility for \emph{any} utility in a wide class of functions (e.g., non-decreasing functions for first-order dominance). Since SD implies higher expectation {\em in the risk-neutral sense} by setting the utility function as the identity function, SD is more selective in model selection as a risk-averse criterion, without the need to specify utility functions. 

\paragraph{Challenges.} Despite the appealing theoretical properties of SD, applications of SD in machine learning remain scarce. In truth, practical algorithms for finding a desirable solution under the criterion of SD remain lacking due to the following critical challenges:
\begin{itemize}%[leftmargin=*, topsep=1pt, itemsep=1pt]
    \item SD, in its existing form, only defines a \emph{partial order} over all distributions of real-valued random variables, which is unsuitable to be optimized directly. 
    \item Evaluating SD involves comparisons along a \emph{continuum} of cumulative distribution functions and thus necessitates computationally efficient algorithms.
\end{itemize}
Existing literature on the computational aspects of stochastic dominance circumvents the first challenge by studying stochastic dominance constraint optimization \citep{dai2023learn}, with the goal of maximizing the objective function --- typically simply set to the expectation of the utility --- over a feasible set that consists of all dominating solutions versus a predefined reference solution. However, the fact that the reference solution being fixed compromises the optimality guarantee in the first place, as the approach fails to distinguish two feasible solutions with the same expected utility, even if one dominates the other. On the other end, while it is possible for standard optimization formulations over a global scalar metric to yield an optimal SD solution (e.g., optimizing a nondecreasing and strictly concave utility function guarantees non-dominance in the second order), the induced total order introduces extra preference beyond SD that can be undesirable. This motivates the question:
%\vspace{-1mm}
\begin{center}
    \textit{Can we design a practical algorithm that finds an optimal solution in terms of stochastic dominance?}    %\vspace{-1mm}
\end{center}
%
%
%\vspace{-2mm}
\subsection{Our contribution}
%\vspace{-1mm}
In this work, we aim to establish a general framework of learning with stochastic dominance, by tackling the two challenges mentioned above.
We handle the first challenge by quantifying the degree of stochastic dominance as a functional and formulating SD optimality as a fixed point of the corresponding optimization process. This motivates the design of an iterative optimization procedure with non-stationary objective functions that can be solved efficiently.
We summarize our contributions as follows.
\begin{itemize}%[leftmargin=*, topsep=1pt, itemsep=0pt]
    \item We first generalize the original stochastic dominance concept to enable feasible comparisons between any arbitrary pair
of random variables, paving the way to a general machine learning framework that optimizes stochastic dominance.
    \item We propose \textbf{L}earning with \textbf{S}tochastic \textbf{D}ominance (\fancyname), a novel first-order method for finding approximate optimal solutions in terms of stochastic dominance in the hypothesis space. The optimality under \fancyname\ is equivalent to non-dominance under SD and hence allows direct combination with any risk measure or preference functional that is compatible with SD.
    \item We establish convergence guarantees under mild technical assumptions despite the non-stationary nature of the optimization process. It is shown that \fancyname~finds an $\epsilon$-approximate optimal solution within $\mathcal{O}(\epsilon^{-2})$ iterations, which  introduces minimal computational overhead compared with standard mini-batch stochastic gradient method.
    \item We draw connections between SD and distributionally robust optimization (DRO), allowing us to interpret the proposed method as optimizing a surrogate distributionally robust loss.
\end{itemize}
To the best of our knowledge, this work presents the first attempt towards a computationally tractable approach for learning stochastic dominance optimal solutions, both practically and theoretically. Numerical experiments are demonstrated to illustrate the effectiveness of our framework for finding risk-averse yet performant solutions in a variety of learning tasks such as supervised learning, reinforcement learning, and portfolio optimization.

The rest of this paper is organized as follows.~\cref{sec:prelim} develops a general learning framework using stochastic dominance. \cref{sec:sdlearning} presents a computationally efficient algorithm and its theoretical computational complexity. Numerical results are presented in~\cref{sec:experiments}. 
Finally, we conclude the paper in~\cref{sec:conclusion},  and leave the proofs and a connection to DRO in~\cref{sec:dro}.
 
\paragraph{Notation.} 
We denote real-valued random variables by upper case letters, e.g., $X$, and the corresponding observed values by lower case letters, e.g., $x$. The set of probability measures over set $\mathcal{A}$ is denoted by $\Delta(\mathcal{A})$. $(x)_+$ is a shorthand notation of $\max(0, x)$. Two random variables $X$ and $Y$ are  equal in distribution if they have the same distribution, denoted as $X \overset{\mathsf{D}}{=} Y$.

%\vspace{-2mm}
\subsection{Related works}
%\vspace{-1mm}

\paragraph{Stochastic-dominance constrained optimization.}
 In the literature, 
SD is often used to characterize the feasible set of an optimization problem as a constraint w.r.t. a \emph{given} competitor. In contrast, in our SD learning framework, we seek the optimal solution in the stochastic dominance sense within the \emph{whole hypothesis space}, instead of against a fixed competitor. 
Many previous related works tackle the SD constrained optimization by casting the comparison of $k$th-order cumulative distribution functions \citep{dentcheva2003optimization,dentcheva2004optimality, noyan2006relaxations} or its equivalent reformulations \citep{luedtke2008new,post2003empirical,armbruster2015decision} as linear programming and mixed-integer programming problems, which typically incurs a quadratic iteration complexity/memory consumption and is hence not applicable to large-scale practical problems. \citet{dai2023learn} presents the latest effort towards efficiently solving SD constrained optimization and achieves (near) linear computation and memory cost. The key ingredient lies in the efficient solver for the inner optimization in the Lagrangian, which is integral to our learning algorithm development as well. 
 
\paragraph{Connections to other risk-sensitive approaches.} Quantile statistics such as Value at Risk (${\rm VaR}_\alpha$) \citep{e5a1bb8f-41b7-35c6-95cd-8b366d3e99bc,1f449481-0b1e-3cf9-9c81-73b96bad1ed7} and Conditional Value at Risk (${\rm CVaR}_\alpha$) \citep{artzner1999coherent} represent another popular choice of risk measure beyond variance. For a risk level $\alpha\in(0, 1)$, ${\rm VaR}_\alpha$ is given by the $(1-\alpha)$-quantile of the loss, whereas ${\rm CVaR}_\alpha$ takes a step further by focusing on the conditional expectation of loss beyond ${\rm VaR}_\alpha$. Remarkably, second-order stochastic dominance (SSD) can be interpreted as a continuum of ${\rm CVaR}_\alpha$ comparison over the entire risk level set $(0, 1)$ \citep{martin2020stochastically}, which again justifies the superior theoretical properties of SD.

Beyond the uncertainty that stems from a known data distribution, distributionally robust optimization (DRO) seeks to optimize the model against the uncertainty in the knowledge of the distribution itself, by focusing on the worst-case expectation under some distribution shift. \citet{duchi2021statistics} demonstrated that for uncertainty set induced from $f$-divergence balls, the DRO formulation is asymptotically equivalent to a mean-risk treatment, with the risk measure given by the square root of variance. 

\paragraph{Other generalization of stochastic dominance.} Notable prior efforts of generalizing stochastic dominance and mitigating incomparability include \textit{almost stochastic dominance} \citep{leshno2002preferred}, \textit{statistical preference} \citep{de2003fuzzy}, \textit{graded stochastic dominance} \citep{perez2021graded}, 
\textit{generalized stochastic dominance for classifier comparison} \citep{jansen1857statistical} with further developments in \citet{muller2022sufficient, luo2017almost, de2005cycle, jansen2024statistical}, among others. However, these contributions remain largely theoretical and do not yield a practical machine learning framework that is compatible with stochastic dominance, which we seek to address in this work.

\paragraph{Distributional RL.}
Distributional RL \citep{bdr2023} provides a systematic approach towards learning the distribution of the cumulative rewards induced by executing a policy in RL. While this allows the decision maker to resort to mean-risk approaches for risk-averse policies, a policy improvement scheme that fully utilizes the learned distributions in terms of stochastic dominance remains lacking. \citet{martin2020stochastically} investigated the use of SD for action selection using a particle-based algorithm, which involved extra computation and thus fell short of providing an explicit policy.

% \vspace{-2mm}
\section{Stochastic Dominance Learning}\label{sec:prelim}
% \vspace{-2mm}

In this section, we first introduce the concept of stochastic dominance, and reveal the difficulty in defining optimality in terms of stochastic dominance.
We then resolve this difficulty and establish the stochastic dominance learning framework. 

% \vspace{-1mm}
\paragraph{Stochastic dominance.} Let $X$ denote a real-valued random variable. The $k$-th distribution function $F_X^{k}$ is defined recursively as 
\begin{align}\label{eq:dist_func}
F_X^1(\eta)&=\mathbb{P}_{X}(X\le \eta);\\
F_X^k\rbr{\eta} &= \int_{-\infty}^\eta F_X^{k-1}\rbr{\alpha} {\rm d}\alpha %\notag\\
= \frac{1}{\rbr{k-1}!}\EE_X\sbr{\rbr{\eta - X}^{k-1}_+}, \qquad k \ge 2
\end{align}
where $F_X^1(\eta)$ is simply the standard cumulative distribution function (CDF).
Then, $X$ dominates $Y$ in the $k$-th-order if \citep{mann1947test,dentcheva2003optimization,lehmann2011ordered}
\begin{equation}\label{eq:sd_defition}
F_X^k\rbr{\eta}\le F_Y^k\rbr{\eta},\quad \forall \eta\in \RR,
\end{equation}
denoted as $X \succeq_k Y$. By definition, the $k$-th-order dominance implies the $(k+1)$th-order dominance. In practice, the popular choices are $k=1$ or $k=2$. First-order stochastic dominance (FSD), by definition, pursues consistently a lower probability of the random variable falling below a threshold, which equivalently asserts the existence of $\overline{X} \overset{\mathsf{D}}{=} X$ and $\overline{Y} \overset{\mathsf{D}}{=} Y$ such that $\overline{X} \ge \overline{Y}$ a.s. Second-order stochastic dominance (SSD), due to the aforementioned intrinsic relationship to CVaR, allows more fine-grained comparisons among random outcomes with the same expectations. Figure \ref{fig:F2_normal} illustrates two normal distributions centered at $0$ with different variance, and their corresponding $F^2$ function. SSD favours the one with smaller variance as it yields a consistently lower $F^2$ function.
%
% \vspace{-2mm}
\begin{figure}[h]
    \centering
    \includegraphics[width=0.45\linewidth,trim={10 0 10 10},clip]{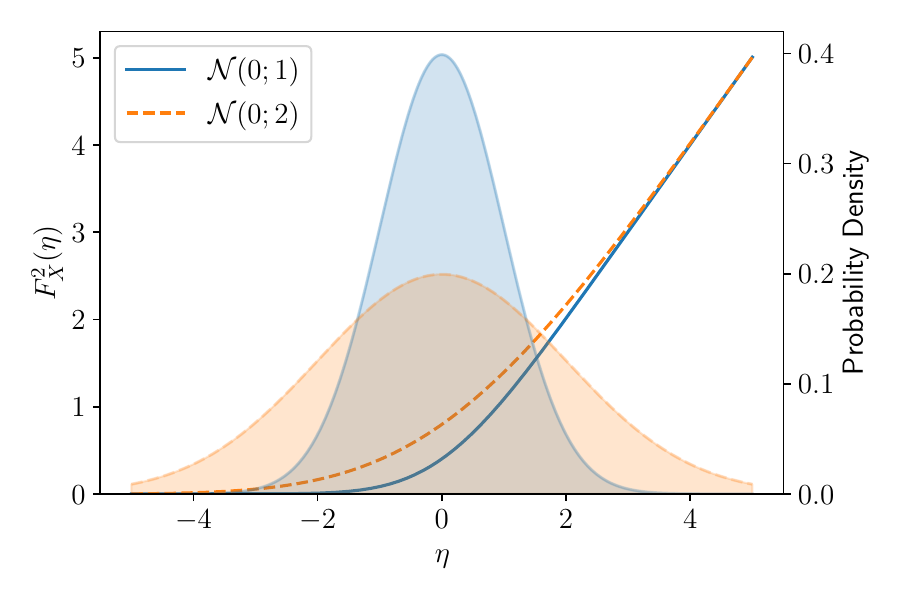}
    % \vspace{-4mm}
    \caption{Probability density and second-order CDF of $\mathcal{N}(0;1)$ and $\mathcal{N}(0;2)$.
    }
    \label{fig:F2_normal}
    % \vspace{-2mm}
\end{figure}

% \vspace{-3mm}
\paragraph{Learning with SD.} In many applications, we can model the quantity of interest by $X_{\theta}$, associated with some parameterized model $\theta\in \Theta$, which we would like to maximize in general. As examples, we can rethink a few typical machine learning and decision making problems through the lens of random variable selection, to which we hope to leverage SD to reason their preferences.
\begin{itemize}%[leftmargin=*, topsep=1pt, itemsep=0pt]
    \item \textit{Supervised learning.} Let feature vector $\mathbf{x}$ and corresponding label $\mathrm{y}$ be sampled from data distribution $\mathcal{D}$. The performance of a parameterized model $f_\theta$ is measured by the sample loss function $\ell(f_\theta(\mathbf{x}), \mathrm{y}))$. Typical choices of $\ell$ include mean squared error for regression tasks and cross entropy loss for classification tasks. Empirical ``risk'' minimization seeks to optimize the expected sample loss $\mathbb{E}_{(\mathbf{x},\mathrm{y})\sim\mathcal{D}} \ell(f_\theta(\mathbf{x}), \mathrm{y}))$ by minimizing the empirical estimate $\frac{1}{N}\sum_{i=1}^{N} \ell(f_\theta(\mathbf{x}_i), \mathrm{y}_i)$, where $\{\mathbf{x}_i, \mathrm{y}_i\}_{i=1}^N$ is the training dataset. We negate the sample loss in consistency with the definition of stochastic dominance, i.e., $X_{\theta} = -\ell(f_\theta(\mathbf{x}), \mathrm{y}), (\mathbf{x}, \mathrm{y})\sim\mathcal{D}$. One can compare different models via examining SD relations among $X_\theta$'s over the hypothesis space of $\theta$. 
    
    \item \textit{Reinforcement learning.} Consider a discounted MDP with state space $\mathcal{S}$, action space $\mathcal{A}$, reward function $r: \mathcal{S}\times\mathcal{A}\to \mathbb{R}$ and transition kernel $P(\cdot|s,a)$ that defines the distribution of the next state upon choosing action $a$ at state $s$. A policy $\mathcal{S} \mapsto \Delta(\mathcal{A})$ defines a random action selection rule for each observed state, which induces random trajectory $\tau = (s_0,a_0,r_0,s_1,\cdots)$ where $a_t \sim \pi(\cdot|s_t)$, $s_{t+1} \sim P(\cdot|s_{t}, a_{t})$, $r_t = r(s_t, a_t)$ from some initial state $s_0\in\mathcal{S}$. The cumulative reward of a trajectory $\tau$ is given by $r(\tau) = \sum_{t=0}^\infty \gamma^t r_t$, where $\gamma \in (0,1)$ is the discount factor. Standard learning paradigm in RL is to maximize the expected cumulative return $\mathbb{E}_{\tau}[r(\tau)]$, i.e., value function, with respect to a parameterized policy $\pi_\theta$. Stochastic dominance for comparison between policies can be therefore deployed by setting $X_\theta = r(\tau), \tau\sim P\circ\pi_\theta$. 
    
    \item \textit{Portfolio optimization.} Let random variable $R_i$ denote the return of stock $i$ which can be heavy-tailed and correlated. Portfolio optimization seeks to get an ideal allocation $w_\theta\in \Delta_K$ of assets among $K$ stocks to achieve trade-off between the expected return $\mathbb{E}[\sum_{i=1}^K [w_\theta]_i R_i]$ and some risk measure. Stochastic dominance naturally applies by focusing on the total return $X_\theta = \sum_{i=1}^{K}[w_\theta]_iR_i$.
\end{itemize}

\paragraph{Generalized stochastic dominance for optimality.}
One might be tempted to search for a model $\theta^\star$ that dominates all $\theta\in \Theta$, i.e., the \textit{greatest element} under stochastic dominance rule in the aforementioned learning scenarios, which would imply (c.f. \eqref{eq:sd_defition})
\begin{equation} \label{eq:elixir}
    \min_\eta \,\Big[F_{X_{\theta}}^k(\eta) - F_{X_{{\theta}^\star}}^k(\eta)\Big] \ge 0, \quad \forall\theta \in \Theta.
\end{equation}
However, such $\theta^\star$ is not guaranteed to exist due to the fact that SD only defines \emph{partial} order among distributions. In other words, there exist two random variables such that their order cannot be distinguished in the sense of SD. Therefore, it is impossible to define a general \emph{``optimality"} in the sense of \eqref{eq:elixir}. This gap hinders the development of a learning framework under SD from both theoretical justification and optimization-based algorithm design, which motivates a more general definition of SD. Consider the following two statements:
\begin{enumerate}
    \item $X_{\theta^\star}$ is a \textit{maximal element} under the partial order of stochastic dominance, i.e., not dominated by other feasible solutions.
    \item We have
    \begin{equation}
    {\max_{\eta} \Big[F_{X_{\theta}}^k(\eta) - F_{X_{\theta^\star}}^k(\eta)\Big] \ge 0,\quad \forall \theta\in\Theta,}
    \label{eq:opt}
\end{equation}
where the equality is achieved only when $X_{\theta} \overset{\mathsf{D}}{=} X_{\theta^\star}$.
\end{enumerate}
In view of the equivalence of the above two statements, we propose the following \emph{Generalized Stochastic Dominance} functional:
\begin{equation} \label{eq:generalized_SD}
    \Omega_k(X,Y) = \max_{\eta\in[a,b]} \Big[F_X^k(\eta) - F_Y^k(\eta)\Big],
\end{equation}
which  quantifies the degree of stochastic dominance between $X$ and $Y$ over the interval $[a,b]$ in an unilateral way.\footnote{With slight abuse of notation, we drop the dependency on $[a,b]$ in the notation of $\Omega$ for conciseness.} Here, we follow the convention of enforcing stochastic dominance on an interval $[a, b]$ to overcome numerical tractability issues and technical subtleties \citep{leshno2002preferred,dentcheva2003optimization}. The proposition states the existence of such non-dominated solutions under mild conditions, with the proof deferred in Appendix~\ref{sec:pf_prop_existence}.

\begin{proposition}\label{prop:existence}
It is guaranteed that a non-dominated solution $\theta^\star$ exists as long as $\Theta$ is compact and that $F_{X_\theta}^k(\eta)$ is continuous with regard to $\theta$ for every $\eta \in \mathbb{R}$.
\end{proposition}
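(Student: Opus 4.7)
The plan is to reduce the existence question to a standard compactness-continuity argument by scalarizing the partial order. Consider the functional
\begin{equation*}
\phi(\theta) := \int_a^b F_{X_\theta}^k(\eta) \, d\eta,
\end{equation*}
which is monotone along $\succeq_k$ on $[a,b]$: the pointwise inequality $F_{X_\theta}^k \le F_{X_{\theta'}}^k$ immediately gives $\phi(\theta) \le \phi(\theta')$. The aim is then to show that $\phi$ attains its minimum on $\Theta$ and that any such minimizer is non-dominated in the sense of the paper.

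I would first establish continuity of $\phi$. For any convergent sequence $\theta_n \to \theta$, the hypothesis yields $F_{X_{\theta_n}}^k(\eta) \to F_{X_\theta}^k(\eta)$ pointwise for every $\eta \in [a,b]$. Since $F_{X_{\theta_n}}^k$ is nondecreasing in $\eta$, its values on $[a,b]$ are bounded by $F_{X_{\theta_n}}^k(b)$, which is locally bounded in $\theta_n$ thanks to the assumed continuity at the single point $\eta = b$. Dominated convergence then gives $\phi(\theta_n) \to \phi(\theta)$; combined with compactness of $\Theta$, this produces a minimizer $\theta^* \in \Theta$.

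To close the argument, I would verify that such a $\theta^*$ must be non-dominated. Suppose, toward contradiction, that some $\theta \in \Theta$ strictly dominates $\theta^*$, i.e., $F_{X_\theta}^k \le F_{X_{\theta^*}}^k$ pointwise on $[a,b]$ while $X_\theta$ is not equal in distribution to $X_{\theta^*}$. Then the two stacks of CDFs must differ at some $\eta_0 \in [a,b)$, where the inequality is strict. For $k \ge 2$, continuity of $F^k$ in $\eta$ extends the strict gap to an open neighborhood of $\eta_0$; for $k=1$, right-continuity of the CDF propagates the strict inequality to a right-neighborhood $[\eta_0,\eta_0+\delta)$. Either way $\phi(\theta) < \phi(\theta^*)$, contradicting minimality of $\theta^*$. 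The main obstacle is precisely this propagation of strictness in step two — particularly for $k=1$, where CDFs may jump — and it is resolved by right-continuity lifting a pointwise strict gap to strictness on a set of positive Lebesgue measure, which is enough for the scalarization $\phi$ to register the gap.
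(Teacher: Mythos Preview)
Your argument is correct and takes a genuinely different route from the paper. The paper proceeds via Zorn's lemma: it takes an arbitrary chain in $(\Theta,\succeq_k)$, uses compactness of $\Theta$ to extract a limit point $\tilde\theta$, and then uses pointwise continuity of $\theta\mapsto F_{X_\theta}^k(\eta)$ together with monotonicity of $F^k$ along the chain to certify $\tilde\theta$ as an upper bound; Zorn's lemma then yields a maximal element. Your approach instead scalarizes the partial order by the monotone functional $\phi(\theta)=\int_a^b F_{X_\theta}^k(\eta)\,d\eta$ (which is just $F_{X_\theta}^{k+1}(b)-F_{X_\theta}^{k+1}(a)$), establishes its continuity by dominated convergence, and invokes the extreme value theorem on the compact set $\Theta$. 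This buys you a more elementary and essentially constructive proof: no appeal to the axiom of choice, and any minimizer of a concrete continuous functional is automatically non-dominated. The paper's Zorn argument, by contrast, sidesteps the one delicate step in your proof, namely propagating a pointwise strict inequality between $F^k$'s to a set of positive Lebesgue measure so that $\phi$ detects it; you handle this cleanly for $k\ge 2$ via continuity of $F^k$ in $\eta$, and for $k=1$ via right-continuity of CDFs, though the boundary case $\eta_0=b$ for $k=1$ deserves a word (agreement on $[a,b)$ with a strict gap only at $b$ would make $\phi$ blind to the difference; in the paper's $[a,b]$-restricted setting this is a measure-zero degeneracy one can safely fold into the equivalence class).
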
  

In view of the generalized SD in \eqref{eq:opt}, it is now natural to define a general learning problem through an optimization lens, by seeking an approximate optimal solution $\widehat{\theta}^\star$ such that for any $\theta\in \Theta$, it holds that
\begin{equation}
\Omega_k(X_{\theta},X_{\widehat{\theta}^\star}) \ge -\epsilon.
    \label{eq:approx_opt}
\end{equation}
In other words, $\widehat{\theta}^\star$ is guaranteed to not be dominated by any other solution $\theta$ by a margin of $\epsilon$ over the interval $[a, b]$.

%%%%%%%%%%%%%%%%%%%%%%%%%%%%%%%%%%%%%%%%%%%%%%%%%%%%%%%%%%%%%%%%%%%%%%%%%%
%\vspace{-2mm}
\section{\fancyname: First-order Optimization for Learning with SD}
\label{sec:sdlearning}
%\vspace{-1mm}
%%%%%%%%%%%%%%%%%%%%%%%%%%%%%%%%%%%%%%%%%%%%%%%%%%%%%%%%%%%%%%%%%%%%%%%%%%
% \Bo{@shicong, how about we echo the difficulties in introduction section here: 
% \begin{itemize}
%     \item first discuss the generalization SD for full order 
%     \item second practical algorithm. 
% \end{itemize}
% }

In this section, we design efficient first-order algorithm to solve \eqref{eq:approx_opt}, resolving the computational difficulty discussed in~\cref{sec:intro}. 

%\vspace{-2mm}
\subsection{Stochastic Gradient for SD Learning}

% For notational simplicity, we denote
% \begin{equation*}
%     {\Omega_k}(X, Y) \coloneqq \max_{\eta\in[a,b]}\Big[F_k(X;\eta) - F_k(Y;\eta)\Big].
% \end{equation*}
The optimality condition \eqref{eq:opt} can be written as 
$$\theta^\star = \arg\min_{\theta} {\Omega_k}(X_{\theta}, X_{\theta^\star}),$$ which motivates us to interpret $\theta^\star$ as a fixed point of the following iterative process:
\begin{equation}
    \label{eq:opt_iterate_exact}
    \theta_{t+1} \gets \arg\min_{\theta} {\Omega_k}(X_{\theta}, X_{\theta_{t}}).
\end{equation}
% A natural question arises with regard to the convergence behaviour of the above procedure, as 
Here, $\theta_t$ denote the choice of parameter $\theta$ at the $t$-th iteration. Rather than pursuing an exact optimal solution to \eqref{eq:opt_iterate_exact}, we show that it suffices to find $\theta_{t+1}$ such that
\begin{equation}
    {\Omega_k}(X_{\theta_{t+1}}, X_{\theta_t}) < -\epsilon,
    \label{eq:progress_condition}
\end{equation}
since failing to find such $\theta_{t+1}$ implies the approximate optimality of $\theta_{t}$ by definition~\eqref{eq:approx_opt}. Since ${\Omega_k}(X_{\theta_{t}}, X_{\theta_{t}}) = 0$, this amounts to making progress of $\epsilon$ on the sub-problem
\begin{equation}
    \min_{\theta} {\Omega_k}(X_{\theta}, X_{\theta_t}),
    \label{eq:opt_iterate}
\end{equation}
which makes gradient-based methods an ideal candidate.

\paragraph{Subgradient calculation.} It remains unclear the optimization properties of \eqref{eq:opt_iterate} as well as how to estimate gradients. To proceed, we shall resort to the utility reformulation of $\Omega$. Note that ${\Omega_k}(X, Y)$ can be equivalently written in a variational form:
\begin{equation}
    {\Omega_k}(X, Y) = \max_{\mu\in\Delta([a,b])} \int_a^b \big[F_X^k(\eta) - F_Y^k(\eta)\big]{\rm d}\mu(\eta),
    \label{eq:variational}
\end{equation}
where the maximum is taken over probability measures over $[a, b]$. For every choice of $\mu$, changing the order of integral
 by Fubini's theorem, we have
 \begin{align} 
     & \int_a^b \big(F_X^k(\eta) - F_Y^k(\eta)\big) {\rm d}\mu(\eta) \nonumber \\
     &= \int_a^b \frac{1}{(k-1)!}\Big[\int_{-\infty}^{\infty} (\eta - x)_{+}^{k-1} f_X(x){\rm d}x - \int_{-\infty}^{\infty} (\eta - y)_+^{k-1} f_Y(y){\rm d}y \Big]  {\rm d} \mu(\eta) \nonumber \\
     &= \frac{1}{(k-1)!}\Big[\int_{-\infty}^{\infty} \int_a^b (\eta - x)_+^{k-1} f_X(x) {\rm d} \mu(\eta) {\rm d}x - \int_{-\infty}^{\infty} \int_a^b (\eta - y)^{k-1} f_Y(y) {\rm d} \mu(\eta){\rm d}y  \Big] \nonumber \\
     &= -\int_{-\infty}^{\infty} u(x) f_X(x) {\rm d}x + \int_{-\infty}^{\infty} u(y) f_Y(y) {\rm d}y  \nonumber \\ 
     & =\underbrace{-\mathbb{E}_{X}[u(X)] + \mathbb{E}_{Y}[u(Y)]}_{=:L(X, Y, u)}, \label{eq:utility_reform}
\end{align}
where the utility function $u$ is defined as 
\begin{equation}
    u(x) = -\frac{1}{(k-1)!} \int_a^b (\eta - x)_+^{k-1} {\rm d}\mu(\eta).
    \label{eq:utility}
\end{equation}
Therefore, we can write $\Omega_k(X, Y)$ as
\begin{align*}
    \Omega_k(X, Y) &= \max_{u\in \mathcal{U}_k} \Big\{-\mathbb{E}_{X}[u(X)] + \mathbb{E}_{Y}[u(Y)]\Big\}.
\end{align*}
Here, $\mathcal{U}_k = \{u:u(x) = -\frac{1}{(k-1)!} \int_a^b (\eta - x)_+^{k-1} {\rm d}\mu(\eta), \mu\in\Delta([a,b])\}$ collects all utility functions that can be expressed in the form of \eqref{eq:utility}. 

Note that when $k \ge 2$, $u \in \mathcal{U}_k$ is non-decreasing and concave, which guarantees $u(x_\theta)$ to be concave with regard to $\theta$, as long as $x_\theta$ is concave \citep{boyd2004convex}. When the sampling probability of $X_\theta$ is independent of $\theta$, such as in supervised learning and portforlio optimization, $\Omega_k(X_\theta, Y)$ takes maximum over a set of convex functions and is therefore convex as well. The subgradients of $\Omega_k\rbr{X_\theta, Y}$ \citep{bertsekas1971control} is given by
\begin{align*}
    \partial_\theta [\Omega_k\rbr{X_\theta, Y}] &= \text{conv} \Big\{- \partial_\theta \big[\mathbb{E}_{X_\theta}[u(X_\theta)]\big]: u\in \mathcal{U}_k^\star\Big\} \\
    &= \text{conv} \Big\{ -\mathbb{E}_{X_\theta}[\partial_\theta u(X_\theta)]: u\in \mathcal{U}_k^\star\Big\},
\end{align*}
where $\mathcal{U}_k^\star = \arg\max_{u \in \mathcal{U}_k}L(X,Y,u)$, and $\text{conv}$ is the convex hull. The expectation formulation of the above equation allows estimation of the subgradient using sampling, i.e., the sample average, and subgradient chain rule (see e.g., \citet[Theorem 4.19]{clason2020introduction}), given by
% \vspace{-4mm}
\begin{equation}
    -\frac{1}{N}\sum_{i=1}^N \partial_{\theta}u(x_{i}) = -\frac{1}{N}\sum_{i=1}^N \partial_{x_{i}}u(x_{i}) \partial_\theta x_i,
    \label{eq:subgrad}
\end{equation}
 where $\{x_{i}\}_{i=1}^{N}$ are $N$ data points sampled from $X_\theta$. Here, we omit the dependency with $\theta$ in $x_i$ for notational simplicity. This allows interpreting our proposed method as stochastic gradient methods with each sample $x_{i}$ {\em dynamically weighted} by $\partial_{x_{i}}u(x_{i})$. For learning tasks with model-dependent sampling probability (e.g., RL), one can instead apply log-derivative trick \citep{williams1992simple} for gradient estimation (see Appendix \ref{sec:sd_rl} for more details).
% \yc{elaborate/summarize}

%Approximate the subgradient by sampling and 
\paragraph{Final algorithm.} We summarize the algorithm procedure in Algorithm \ref{alg:sdlearning}. Simply put, the algorithm follows a nested-loop design, where the inner loop focuses on solving \eqref{eq:opt_iterate} by first obtaining $\widehat{u}^\star$ that maximizes the sample estimate of $L$ and then derive the stochastic subgradient with \eqref{eq:subgrad}. We terminate the inner loop and update $\theta_{t}$ when the progress condition \eqref{eq:progress_condition} is approximately met. If \eqref{eq:progress_condition} is not met within a certain number of iterations, we conclude that the current $\theta_{t}$ is approximately optimal and return the solution.

\begin{algorithm}[h]
\caption{Learning with Stochastic dominance (\fancyname)}
\label{alg:sdlearning}
\textbf{Input:} Initialization $\theta_0$.\\
\For{$t = 0,\cdots, T_{\texttt{max}}-1$}{
{Set $\theta_{t, 0} = \theta_{t}$. \\}
\For{$\overline{t} = 0,\cdots, \overline{T}_{\texttt{max}}-1$}{
{Sample data $\{x_{{t, \bar{t}}, i}\}_{i=1}^{N} \sim X_{\theta_{t,\overline{t}}}^N$ and $\{x_{{t}, i}\}_{i=1}^{N} \sim X_{\theta_{t}}^N$. } \\
Compute $\widehat{u}^\star = \argmax\limits_{u\in \mathcal{U}_k}\widehat{L}(X_{\theta_{t,\overline{t}}}, X_{\theta_{t}}, u)$, where
\vspace{-2mm}
\begin{align*}
& \widehat{L}(X_{\theta_{t,\overline{t}}}, X_{\theta_{t}}, u)  \coloneqq -\frac{1}{N}\sum_{i=1}^N u(x_{{t,\overline{t}},i}) + \frac{1}{N}\sum_{i=1}^N u(x_{t,i}).    
\end{align*}
Update $\theta_{t,\overline{t}+1} = \theta_{t,\overline{t}} - \eta_{\overline{t}}g_{t,\overline{t}}$, where  
\vspace{-2mm}
\begin{equation*}
g_{t,\overline{t}} \in -\frac{1}{N}\sum_{i=1}^N \partial_{x_{t,\overline{t}, i}}\widehat{u}^\star(x_{t,\overline{t}, i}) \partial_{\theta} x_{t, \overline{t}, i}.
\end{equation*}
% \begin{align*}
% \theta_{t, \overline{t}+1} &= \theta_{t, \bar{t}} - \frac{\eta_{\overline{t}}}{N}\sum_{i=1}^{N}\widehat{u}^{\star'}\rbr{x_{{t, \bar{t}},i}}\nabla_\theta x_{{t, \bar{t}},i}.
% \end{align*}\\
\If{$\widehat{\Omega_k}(X_{\theta_{t,\overline{t}+1}}, X_{\theta_t}) \le -\epsilon/2$}{
{Set $\theta_{t+1} = \theta_{t, \overline{t}+1}$.\\}
{\textbf{Break}.}
}
}
\If{$\theta_t$ is not updated}{
{\textbf{Return} $\theta_t$.}}
}
% Set $\theta_{t+1} = \theta_{t, \overline{t}+1}$.\\
\end{algorithm}

%\vspace{-2mm}
\subsection{Theoretical Analysis}
%\vspace{-1mm}

Two questions arise naturally with regard to the theoretical guarantee of the proposed method: \textbf{i)}, whether it is guaranteed to converge, and \textbf{ii)}, whether it induces a significantly higher iteration complexity compared with standard minibatch SGD methods. The concern stems from the fact that the dynamics of Algorithm \ref{alg:sdlearning} cannot be interpreted as an optimization process targeting a fixed objective function, and that one round of inner loop alone can take $\mathcal{O}(\epsilon^{-2})$ iterations to end. 

The following theorem addresses the concerns mentioned above by ensuring convergence within $\widetilde{\mathcal{O}}(\epsilon^{-2})$ total iteration complexity.

\begin{theorem}
\label{thm:main}
For second-order stochastic dominance ($k=2$), assume that $x_{\theta}$ is concave with regard to $\theta$, and bounded subgradient $\|g_{t,\overline{t}}\|_2^2 \le G^2$ and bounded $k-$th order CDF $F_k(X_\theta,\eta)\le C, \forall \eta \in [a,b]$. Let $\eta_{\overline{t}} = 1/\sqrt{\overline{t}}$, and sample size $N = \widetilde{\mathcal{O}}(\epsilon^{-2})$, $T_{\texttt{max}} = \lceil 4C/\epsilon+1\rceil$, $\overline{T}_{\texttt{max}} = \widetilde{\mathcal{O}}(\epsilon^{-2})$. For any initialization $\theta_0$, with probability $1-\delta$, Algorithm \ref{alg:sdlearning} finds $\theta_t$ such that for any $\theta$,
\begin{equation*}
    {\Omega_k}\rbr{X_{\theta}, X_{\theta_{t}}} \ge -\epsilon
\end{equation*}
within $\widetilde{\mathcal{O}}(\epsilon^{-2})$ iterations.
\end{theorem}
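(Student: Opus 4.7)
The plan is to decompose the analysis into three pieces: (i) a potential-function argument bounding the number of outer iterations by $O(C/\epsilon)$; (ii) a convex stochastic subgradient analysis showing that, whenever $\theta_t$ is not already $\epsilon$-approximately optimal, each inner loop succeeds within $\widetilde{\mathcal{O}}(\epsilon^{-2})$ steps; and (iii) uniform concentration over the utility class $\mathcal{U}_2$ ensuring that both the empirical stopping test $\widehat{\Omega}_k$ and the stochastic subgradient \eqref{eq:subgrad} faithfully track their population counterparts. Conditioned on the good event in (iii), every outer round either makes certified progress (via (ii)) or the inner loop's failure to produce a $-\epsilon/2$-progress iterate certifies $\epsilon$-approximate optimality of $\theta_t$ by definition \eqref{eq:approx_opt}.

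For (i), I would exploit the fact that $\Omega_2$ is defined as a \emph{maximum} over $\eta \in [a,b]$: the break condition $\Omega_2(X_{\theta_{t+1}}, X_{\theta_t}) \le -\epsilon/2$ upgrades to the \emph{pointwise} inequality $F^2_{X_{\theta_{t+1}}}(\eta) \le F^2_{X_{\theta_t}}(\eta) - \epsilon/2$ for every $\eta \in [a,b]$. Fixing any such $\eta$, the sequence $\{F^2_{X_{\theta_t}}(\eta)\}_t$ therefore decreases by at least $\epsilon/2$ per successful outer step while remaining in $[0, C]$, giving at most $2C/\epsilon$ successful outer updates; an additional $\epsilon/4$ slack to absorb the empirical-test concentration error matches the prescribed $T_{\max} = \lceil 4C/\epsilon + 1\rceil$.

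For (ii), the first task is to establish convexity of the inner objective $\theta \mapsto \Omega_2(X_\theta, X_{\theta_t})$. With $k=2$, every $u \in \mathcal{U}_2$ from \eqref{eq:utility} has the form $u(x) = -\int_a^b(\eta-x)_+\,d\mu(\eta)$, which is non-decreasing and concave in $x$; composing with the concave $x_\theta$ preserves concavity, so $-\mathbb{E}[u(X_\theta)]$ is convex in $\theta$, and the pointwise supremum over $u \in \mathcal{U}_2$ in the utility reformulation \eqref{eq:utility_reform} preserves convexity. Since $\Omega_2(X_{\theta_t}, X_{\theta_t}) = 0$, the inner initialization $\theta_{t,0} = \theta_t$ starts at value $0$; if $\theta_t$ is not yet $\epsilon$-approximately optimal then the convex minimum of the inner objective is at most $-\epsilon$, so a standard convex SGD analysis with bounded subgradient norm $G$ and step size $\eta_{\overline{t}} = 1/\sqrt{\overline{t}}$ places the averaged (or best-so-far) iterate at objective value $\le -\epsilon/2$ after $\widetilde{\mathcal{O}}(G^2/\epsilon^2)$ inner steps. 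A technical detail to verify is that $-\frac{1}{N}\sum_i \partial_\theta \widehat{u}^\star(x_{t,\overline{t},i})$ in \eqref{eq:subgrad} is an (approximate) subgradient of the convex inner objective at $\theta_{t,\overline{t}}$, which follows from Danskin's theorem applied to the envelope \eqref{eq:utility_reform} together with the sampling concentration from (iii).

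The hardest step is (iii): showing that $\widehat{L}(\cdot,\cdot,u)$ and $\widehat{\Omega}_k$ concentrate \emph{uniformly} over the infinite class $\mathcal{U}_2$. The key structural observation is that $\mathcal{U}_2$ restricted to $[a,b]$ is a family of bounded, $1$-Lipschitz, non-decreasing concave functions, which has Rademacher complexity $\widetilde{\mathcal{O}}(\sqrt{1/N})$ by standard empirical-process bounds for Lipschitz hypothesis classes. This yields $\sup_{u \in \mathcal{U}_2}|\widehat{L}-L| \le \epsilon/4$ and $|\widehat{\Omega}_k - \Omega_k| \le \epsilon/4$ once $N = \widetilde{\mathcal{O}}(\epsilon^{-2})$, and a union bound across the $O(T_{\max}\overline{T}_{\max})$ distinct iterates converts these into the high-probability guarantee. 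The main obstacle is exactly this uniform control: because the data-dependent $\widehat{u}^\star$ is chosen to maximize an empirical supremum, both the break condition and the stochastic subgradient implicitly involve a $\max$ over $\mathcal{U}_2$, and the analysis must ensure that an approximate empirical maximizer yields an approximate subgradient of the population convex objective---otherwise the SGD argument in (ii) does not close and the outer potential in (i) cannot be propagated forward.
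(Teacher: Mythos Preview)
Your three ingredients (i)--(iii) are individually sound, and your pointwise potential argument in (i) is actually a clean alternative to the paper's triangle inequality for $\Omega_k$: since $\Omega_2$ is a $\max$ over $\eta$, the break condition indeed implies $F^2_{X_{\theta_{t+1}}}(\eta)\le F^2_{X_{\theta_t}}(\eta)-\epsilon/4$ (after concentration) for \emph{every} $\eta$, and the outer-loop bound $O(C/\epsilon)$ follows. Likewise, (ii) and (iii) match the paper's convex subgradient and Rademacher-complexity arguments.

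The genuine gap is in the \emph{total} iteration count. Your decomposition ``$O(C/\epsilon)$ outer loops, each of length $\widetilde{\mathcal{O}}(\epsilon^{-2})$'' only yields $\widetilde{\mathcal{O}}(\epsilon^{-3})$ gradient steps, not the $\widetilde{\mathcal{O}}(\epsilon^{-2})$ claimed in the theorem. The missing idea is that the per-round progress \emph{compounds}: if $T$ denotes the last outer index, then for every $t<T$ your own pointwise argument (or equivalently the paper's triangle inequality $\Omega_k(X,Z)\le\Omega_k(X,Y)+\Omega_k(Y,Z)$) gives
\[
\Omega_2(X_{\theta_t^\star},X_{\theta_t})\;\le\;\Omega_2(X_{\theta_T},X_{\theta_t})\;\le\;-\tfrac{(T-t)\epsilon}{4}.
\]
Hence the inner subproblem at round $t$ has optimality gap of order $(T-t)\epsilon$, not just $\epsilon$, so the convex subgradient analysis terminates in $\overline{T}_t=\widetilde{\mathcal{O}}\big(((T-t)\epsilon)^{-2}\big)$ steps. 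Summing $\sum_{t\le T-3}(T-t)^{-2}\le\pi^2/6$ and absorbing the last few rounds into $\overline{T}_{\max}$ gives the claimed $\widetilde{\mathcal{O}}(\epsilon^{-2})$ total. Without this refinement the proof does not deliver the stated rate.
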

% \begin{corollary}
% If the probability density function of $X_\theta$ is bounded by $C > 0$ and it holds that 
% \begin{equation*}
% \max_{s\in[a, b]}\min_{\mu \in \Delta'([a, b])}W_2(\delta_s, \mu) = \epsilon_{approx} < \epsilon/C,
% \end{equation*}
% then
% \[
% \max_{\eta\in[a,b]} \big\{F_k(X_{\theta}; \eta) - F_k(X_{\theta^{(t)}}; \eta) \big\} \ge  - (\epsilon - \epsilon_{approx} C)
% \]
% for all $\theta$.
% \end{corollary}

Several remarks are in order.
\begin{itemize}%[leftmargin=*, topsep=1pt, itemsep=0pt]
    \item {The approximation error $\epsilon$ incorporates the statistical error due to sampling that scale with $N^{-1/2}$, which necessitates a choice of $N = \widetilde{\mathcal{O}}(\epsilon^{-2})$, similar to the case for empirical risk minimization. }
    \item The iteration complexity of $\mathcal{\widetilde{O}}(\epsilon^{-2})$ is on par with that of subgradient methods for optimizing non-smooth convex functions, which sets \fancyname~as an appealing alternative to standard risk-neutral approaches in practice for risk-averse applications. The iteration complexity can be further improved by incorporating regularization terms of $\mu$, e.g., entropy regularization, in the variational form \eqref{eq:variational} to ensure the uniqueness of the maximizer $\mu^\star$, which leads to differentiability of $\Omega$ by Danskin's theorem.
    \item While we state the theorem for $k=2$ for simplicity, the analysis can be easily generalized to $k\ge 2$ by adopting the Rademacher complexity of $\mathcal{U}_k$ and upper bound of $F_k$ accordingly.

    \item Algorithm~\ref{alg:sdlearning} can be used plug-and-play for achieving non-dominance in conjunction with standard empirical risk minimization, CVaR optimization, and related procedures, as long as the underlying risk or preference measure is consistent with stochastic dominance.
\end{itemize}

A key ingredient of our analysis is to relate the sub-optimality gap $\min_{\theta} \Omega_k(\theta, \theta_t)$ in the $t$th loop with the optimization progress in the subsequent rounds, despite the fact that they are associated with different objective functions. This is made possible by exploiting a triangular inequality with $\Omega$, which ensures that the inner loop generally takes a smaller number of iterations than  $\overline{T}_{\texttt{max}}$, ensuring the final iteration complexity is still $\widetilde{\mathcal{O}}(\epsilon^{-2})$. The proof is postponed to Appendix~\ref{sec:pf_thm_main}.

\begin{algorithm}[t]
\caption{Utility solver for $k=2$}
\label{alg:u2}
\textbf{Input:} samples $\{x_i\}_{i=1}^{N}$ and $\{y_i\}_{i=1}^{N}$.\\
Sort both sequences of samples in increasing order, and merge them into $\{\eta_i\}_{i=1}^{2N}$.\\
% Set terms involving $\eta_0$, $\eta_{2N+1}$ to $0$.\\
% Set $\theta_{t+1} = \theta_{t, \overline{t}+1}$.\\
\For{$i = 1, \cdots, 2N$}{
% Compute 
\vspace{-2mm}
\begin{align*} 
    \widehat{F}_X^1(\eta_i) &= \widehat{F}_X^1(\eta_{i-1}) + \mathbf{1}_{\eta_i\in\{x_i\}}/N,\\
    \widehat{F}_Y^1(\eta_i) &= \widehat{F}_Y^1(\eta_{i-1}) + \mathbf{1}_{\eta_i\in\{y_i\}}/N,\\
    \widehat{F}_X^2(\eta_i) &= \widehat{F}_X^2(\eta_{i-1}) + (\eta_i-\eta_{i-1})\widehat{F}_X^1(\eta_{i-1}),\\
    \widehat{F}_Y^2(\eta_i) &= \widehat{F}_Y^2(\eta_{i-1}) + (\eta_i-\eta_{i-1})\widehat{F}_Y^1(\eta_{i-1}).
\end{align*}
}
% \For{$i = 1, \cdots, 2N$}{Compute \begin{equation*}
%     \begin{cases}
%     \widehat{F}_2(X;\eta_i) = \widehat{F}_2(X;\eta_{i-1}) + (\eta_i-\eta_{i-1})\widehat{F}_1(X;\eta_{i-1})\\
%     \widehat{F}_2(Y;\eta_i) = \widehat{F}_2(Y;\eta_{i-1}) + (\eta_i-\eta_{i-1})\widehat{F}_1(Y;\eta_{i-1})
%     \end{cases}.
% \end{equation*}
% }
Get $\widehat{\mu}^\star \in \Delta\Big(\arg\max_{\eta_i\in[a,b]}\big( \widehat{F}_X^2(\eta_i) - \widehat{F}_Y^2(\eta_i)\big)\Big)$.\\
\For{$i = 2N, \cdots, 1$}{
% Compute 
\vspace{-2mm}
\begin{align*}
    \widehat{u}_1(\eta_i) &= \widehat{u}_1(\eta_{i+1}) - \widehat{\mu}^\star(\eta_i),\\
    \widehat{u}_2(\eta_i) &= \widehat{u}_2(\eta_{i+1}) + (\eta_{i+1}-\eta_{i})\widehat{u}_1(\eta_{i+1}).
\end{align*}
}
\textbf{Return} $\widehat{u}_2$.
\end{algorithm}

%\vspace{-2mm}
\subsection{Practical Implementation}
%\vspace{-1mm}
When $k \le 3$, the computation of $\widehat{u}^\star$ can be done in an efficient way that consumes $\mathcal{O}(N)$ memory and $\widetilde{\mathcal{O}}(N)$ time \citep{dai2023learn}. Below we demonstrate the case with $k=2$. 
Recall that each candidate utility function $u$ is associated with a probability measure $\mu$ by 
\begin{equation}
u(x)=-\mathbb{E}_{\eta\sim \mu} [(\eta - x)_{+}].
\label{eq:second_order_utility}
\end{equation}
For $\widehat{L}(X, Y, u)$ induced by samples $\{x_i\}_{i=1}^{N}$ and $\{y_i\}_{i=1}^{N}$, we still have 
\begin{equation*}
    \widehat{L}(X, Y, u) = \int_a^b \big(\widehat{F}_X^2(\eta) - \widehat{F}_Y^2(\eta)\big){\rm d}\mu(\eta),
\end{equation*}
where $\widehat{F}_X^2(\eta)$ is an empirical estimate of $F_X^2(\eta)$ given by $\widehat{F}_X^2(\eta) = \frac{1}{N}\sum_{i=1}^N (\eta - x_i)_{+}.$
% \vspace{-3mm}
% \begin{equation*}
% \widehat{F}_X^2(\eta) = \frac{1}{N}\sum_{i=1}^N (\eta - x_i)_{+}.    
% \end{equation*}
Note that $\widehat{F}_X^2(\eta) - \widehat{F}_Y^2(\eta)$ is piece-wise linear and hence achieves its maximum among the sample values $\{x_i\}$ and $\{y_i\}$. It is then straightforward to get $\widehat{\mu}^\star$ by assigning the weights to the maximizer(s) and obtaining $\widehat{u}^\star$ via \eqref{eq:second_order_utility}. We summarize the procedure in Algorithm \ref{alg:u2}, where the terms involving $\eta_i$ with $i = 0$ and $i=2N+1$ are set to $0$.

In general, $\widehat{F}_X^{k}(\eta) - \widehat{F}_Y^{k}(\eta)$ is piece-wise polynomial of degree $(k-1)$, which allows closed-form solutions up to $k=3$. For $k\ge 4$, which is less likely be considered in practice, one can resort to numerical approximations or an optimization treatment by parameterizing $\mu$ with special neural networks, as discussed in \cite{dai2023learn}.

For further practical consideration, we remark that the extra cost of computing/sampling from the reference $\theta_t$ for the inner loop optimization can be alleviated by instead sampling from the set of samples from previous iterations, in a similar way to experience replay techniques \citep{lin1992self} in RL. This eliminates the need of explicitly keeping and periodically updating the reference solution $\theta_t$ and makes the algorithm more streamlined.

%\vspace{-2mm}
\section{Numerical Experiments}
\label{sec:experiments}
%\vspace{-1mm}

To demonstrate the versatility of our framework,
we evaluate \fancyname~on various tasks including supervised learning, reinforcement learning, and portfolio optimization. 

%\vspace{-2mm}
\subsection{Supervised Learning}
%\vspace{-1mm}
We examined the performance of \fancyname~on image classification tasks with MNIST and CIFAR-10 datasets. For MNIST, we train a simple 6-layer convolutional neural network for 10 epochs. For CIFAR-10, we use a 20-layer ResNet architecture and train for 200 epochs. In both experiments we set batch size to 128 and adopt stochastic gradient descent (SGD) method to optimize the models, with learning rate set to $0.1$ and momentum set to $0.9$. We repeat the training procedure on $30$ random seeds. The proposed method achieves comparable classification accuracy with SGD method, and more stable cross-entropy loss under $\ell_\infty$-bounded distribution shift, characterized by the average absolute deviation from median (see Theorem~\ref{thm:dro}). 
\begin{table}[h]
  %  \small
   % \vspace{-2mm}
    \centering
    \begin{tabular}{ccccc}
    \toprule
    & \multicolumn{2}{c}{MNIST} & \multicolumn{2}{c}{CIFAR-10}\\
    \cmidrule(lr){2-3}\cmidrule(lr){4-5}
    Metric & \fancyname & SGD & \fancyname & SGD \\
    \midrule
    Accuracy  & 99.16\% & \textbf{99.17\%} & 91.2\% &  \textbf{91.4\%} \\
    CE loss  & \textbf{0.0283} & 0.0289 & \textbf{0.293} &  0.339 \\
    MAD & \textbf{0.0286} & 0.0293 & \textbf{0.292} &  0.337\\
    \midrule
    \makecell{DRO loss ($\rho=0.1$) derived \\ from the above statistics} & \textbf{0.0312} & 0.0318 & \textbf{0.322} &  0.373 \\
    \bottomrule
    \end{tabular}
    \caption{Test performance measures of \fancyname\ versus SGD on MNIST and CIFAR-10.}
  %  \vspace{-0.1in}
    \label{tab:my_label}
\end{table}

% \vspace{-2mm}
\subsection{Reinforcement Learning}
%\vspace{-1mm}
We then evaluate the ability of the proposed algorithm of learning risk-averse control policies in two reinforcement learning tasks.
\paragraph{CliffWalking.}
We adopt a modified version of the CliffWalking environment from OpenAI Gym, as illustrated in Figure~\ref{fig:cliff_env}. The action space of the agent is given by $\{0,1,2,3\}$, representing moving by one step in four different directions. When an action is selected, with probability $\epsilon$ the agent will move in a random direction. The agent always start from $s_0$, and receives a reward of $-1$ whenever it falls off the cliffs; reaching the goal instead assigns a positive reward of $+1$. Under both circumstances the episode terminates immediately. Two strategies naturally arise: a risky policy would take the shortest route to the goal and incur a higher chance of falling, while a safe policy prefers taking a detour to minimize the risk. 

\begin{figure}[h]
    \centering
\includegraphics[width=0.5\linewidth,trim={10 10 210 10},clip]{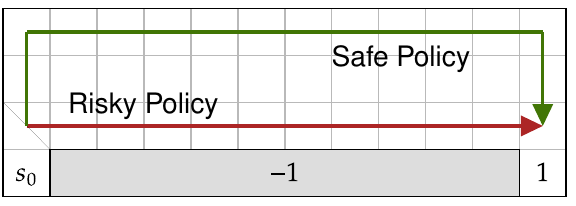}
    % \vspace{-2mm}
    \caption{Illustration of the CliffWalking environment.}
    \label{fig:cliff_env}
    % \vspace{-6mm}
\end{figure}
We tune the values of $\epsilon$ and $\gamma$ to ensure both strategies have similar expected cumulative rewards. We compare the policy learned by \fancyname~(detailed in Appendix \ref{sec:sd_rl}) with the one learned by standard policy gradient method (REINFORCE) \citep{williams1992simple} under tabular parameterization. The two policies yield similar expected return (0.484 v.s. 0.479), yet our approach achieves consistently lower value of $F^2$, as demonstrated in Figure~\ref{fig:cliff_return} (left panel), which demonstrates the risk-averse nature of the learned policy; indeed this can be more evidently observed by examining the density of the return in Figure~\ref{fig:cliff_return} (right panel), where REINFORCE leads to a higher probability of falling (higher probability mass in negative returns).

\begin{figure*}[ht]
    \centering
    \begin{tabular}{cc}
   \includegraphics[width=0.4\textwidth,trim={10 0 10 10},clip]{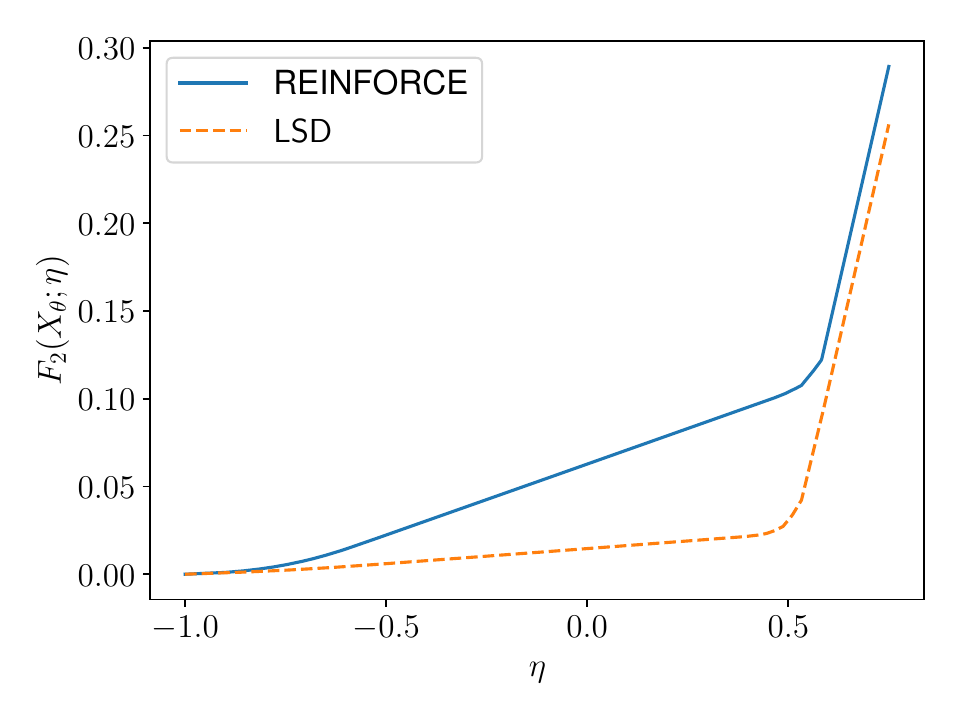} &  \includegraphics[width=0.4\textwidth,trim={10 0 10 10},clip]{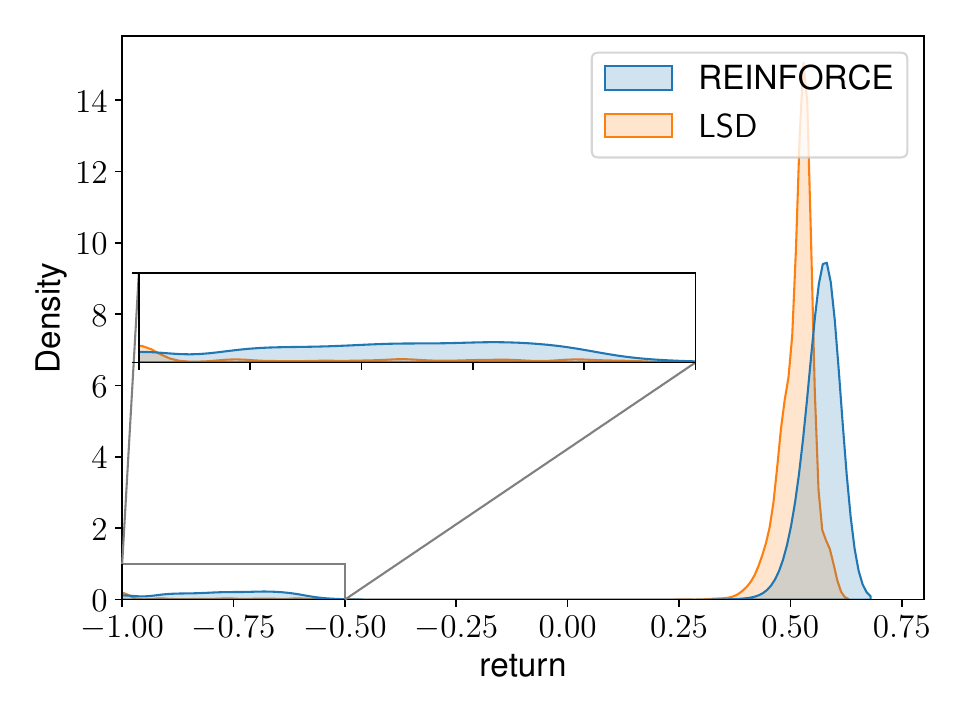} 
 %  (a) $F_2$ of the cumulative return & (b) density of the cumulative return
    \end{tabular}
   % \vspace{-6mm}
    \caption{The $F_2$ (left panel) and density (right panel) of the cumulative return in the CliffWalking environment, by executing the policy learned by REINFORCE and \fancyname-PG, respectively.    }
    \label{fig:cliff_return}
   % \vspace{-4mm}
\end{figure*}
% \vspace{-8mm}

% \begin{figure}[h]
%     \centering
%     % \includegraphics[width=0.45\linewidth]{}
%     % \includegraphics[width=0.45\linewidth]{}
%     \includegraphics[width=\linewidth,trim={10 0 10 10},clip]{cliff_F2.pdf}
%     \caption{$F_2$ plots of cumulative return by executing the policy learned by REINFORCE versus \fancyname.}
%     \label{fig:cliff_F2}
% \end{figure}

\paragraph{CartPole.}
We modify the CartPole-v1 environment from OpenAI gym by perturbing the reward with zero-mean random noise when the cart enters into some certain regions. A risk-aware policy is capable of avoiding such regions without compromising the cumulative return performance. 
The maximal episode length is set to $500$, and the cart position is randomly initialized near the central point. We set the perturb region to $\{x<-0\}$, where $x$ is the cart position. The per-step reward is perturbed by $\delta\sim \text{Uniform}([-\xi, \xi]), \xi = \min(6, 30\cdot|x|)$ as long as the cart stays in the region. The policy is parameterized by a multi-layer perceptron with two hidden layers of size (64, 64). We average all results over five random seeds. While all methods are able to achieve the optimal expected cumulative return of $500$, the policy learned by LSD is able to steer the cart away from the perturbed region, as demonstrated in  the right panel of Figure~\ref{fig:cartpole_return}. This leads to a more concentrated cumulative return distribution that stochastically dominates that induced by the policy learned by REINFORCE (see the left two panels of Figure~\ref{fig:cartpole_return}). Noticeably, CVaR-PG method \citep{tamar2015optimizing} achieves comparable performance with \fancyname, but only with a carefully tuned choice of $\alpha$.

\begin{figure*}[ht]
    \centering
    \begin{tabular}{ccc}
   \includegraphics[width=0.31\textwidth,trim={10 0 10 10},clip]{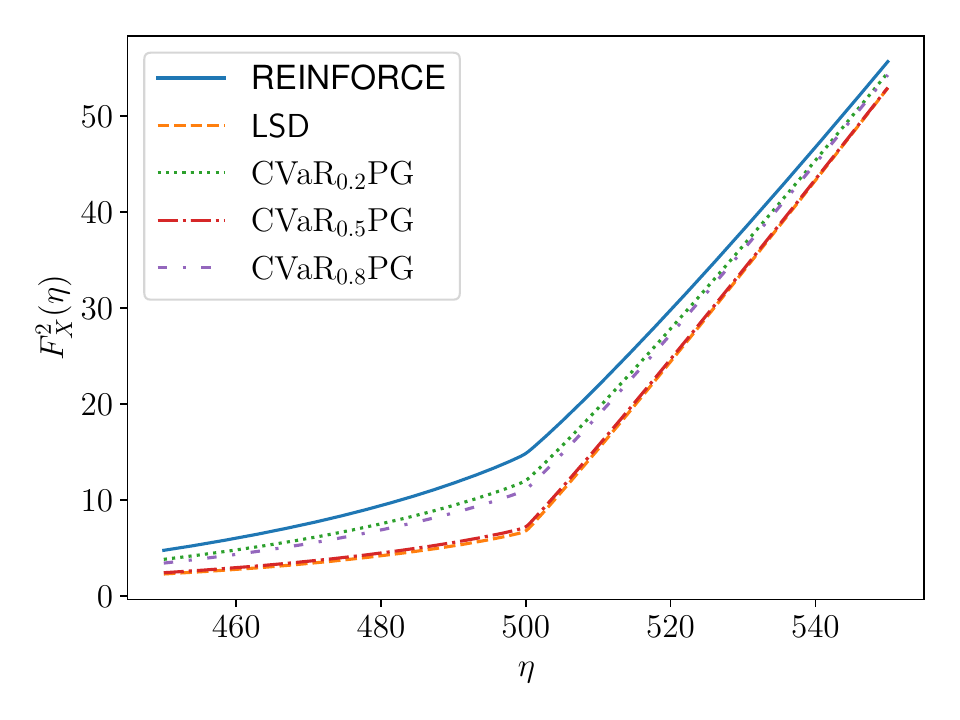} &  \includegraphics[width=0.31\textwidth,trim={0 0 18 10},clip]{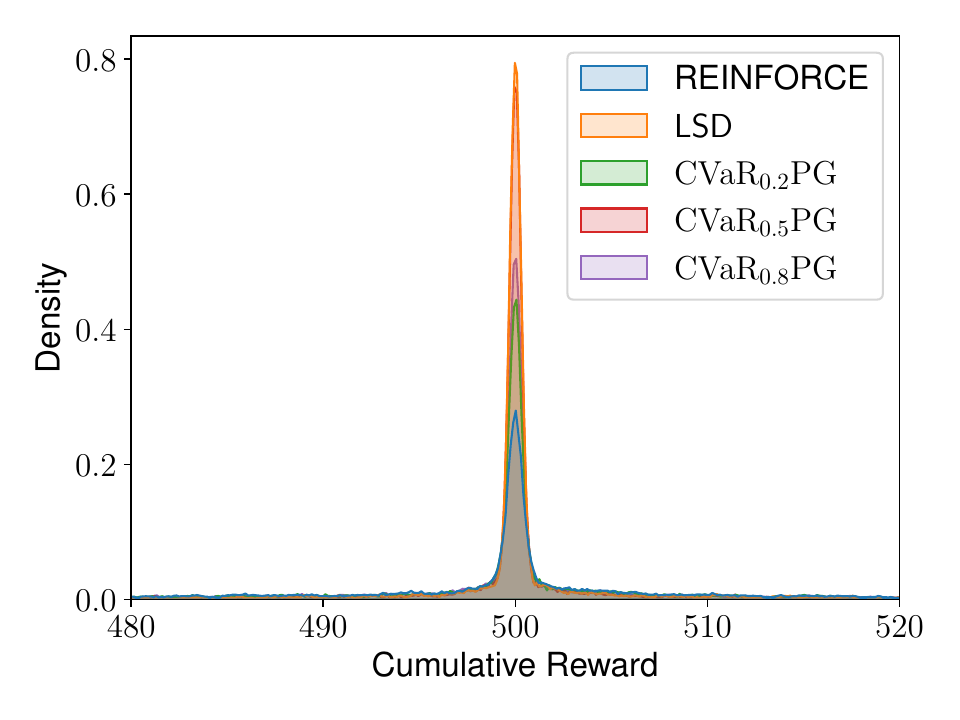} &
   \includegraphics[width=0.31\textwidth,trim={10 0 10 10},clip]{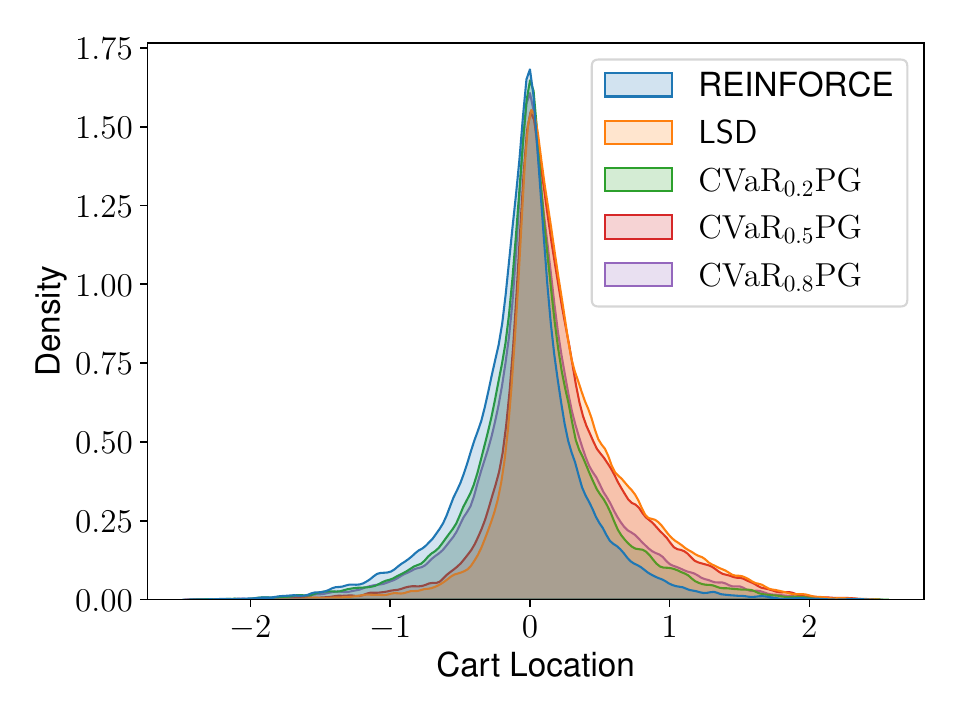}
 %  (a) $F_2$ of the cumulative return & (b) density of the cumulative return
    \end{tabular}
   % \vspace{-6mm}
    \caption{The $F_2$ (left panel) and density (middle panel) of the cumulative return, as well as the density of the visited cart position (right panel) in the CartPole environment, by executing the policy learned by REINFORCE, \fancyname-PG and CVaR$_{\alpha}$PG, respectively. }
    \label{fig:cartpole_return}
   % \vspace{-4mm}
\end{figure*}

% \begin{figure}[!h]
%     \centering
% \includegraphics[width=0.45\textwidth,trim={10 0 10 10},clip]{asym_cart.pdf}
%    % \vspace{-2mm}
%     \caption{Density of the cart position visited by different methods.}
%     \label{fig:cart_pos}
%     %Yuejie: we can remove this Figure if lacking space.
% \end{figure}

% \begin{figure*}[ht]
%     \centering
%     \begin{tabular}{cc}
%     % \includegraphics[width=0.45\linewidth]{cliff_env.png}
%     % \includegraphics[width=0.45\linewidth]{cliff_result.png}
%    \includegraphics[width=0.45\textwidth,trim={10 0 10 10},clip]{asym_reward_F2.pdf} &  \includegraphics[width=0.45\textwidth,trim={0 0 18 10},clip]{asym_reward.pdf} 
%  %  (a) $F_2$ of the cumulative return & (b) density of the cumulative return
%     \end{tabular}
%    % \vspace{-6mm}
%     \caption{The $F_2$ (left panel) and density (right panel) of the cumulative return in the CartPole environment, by executing the policy learned by REINFORCE and \fancyname-PG, respectively.    }
%     \label{fig:cartpole_return}
%    % \vspace{-4mm}
% \end{figure*}

%\vspace{-2mm}
\subsection{Portfolio Optimization}
\label{sec:portfolio_numerical}

We evaluate the performance of \fancyname\ on portfolio optimization with synthesized data and simulate the highly noisy return variables by deploying mixtures of Gaussians with random generated mean and covariance. We set the number of stocks to $100$ and the number of Gaussian mixtures to $20$. To better reflect the heavy-tailed nature of the problem, we multiply each Gaussian sample's distance to its center by a random multiplier drawn from $ \chi_3^2/3$. Table~\ref{tab:my_label} compares the constructed portfolio with those resulting from the mean-variance approach ${\rm MV}_\lambda$ \citep{markowitz2000mean} using different levels of variance penalty $\lambda$, where Figure~\ref{fig:portfolio} further illustrates the density of the portfolio returns. 

% \vspace{-2mm}
 \begin{minipage}{\textwidth}
 \begin{minipage}[c]{0.35\linewidth}
% \begin{figure}[!h]
    \centering
\includegraphics[width=0.9\textwidth,trim={10 0 10 10},clip]{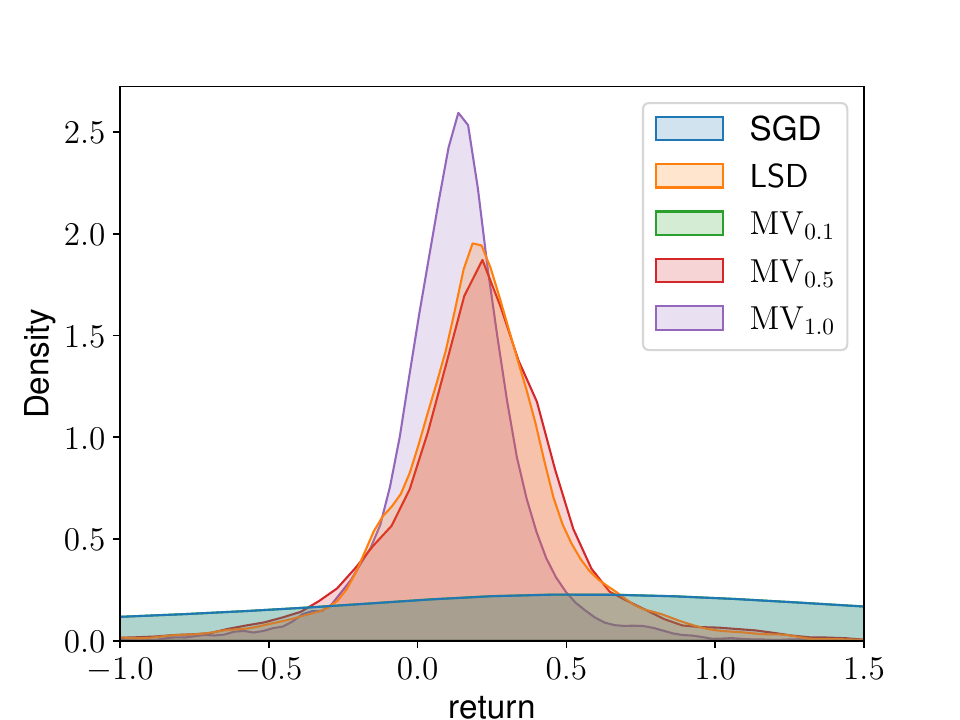}
   % \vspace{-2mm}
    \captionof{figure}{Density of the portfolio returns achieved by different methods.}
    \label{fig:portfolio}
    %Yuejie: we can remove this Figure if lacking space.
% \end{figure}
\end{minipage}\hfill
\begin{minipage}[c]{0.6\linewidth}
 %   \small
    \centering
  %  \vspace{-2mm}
    \begin{tabular}{c|ccccc}
    \toprule
    Metric & SGD & \fancyname & ${\rm MV}_{0.1}$ & ${\rm MV}_{0.5}$ & ${\rm MV}_{1.0}$ \\
    \midrule
    $\mathbb{E}[\rm return]$  & \textbf{0.501} & 0.195 & 0.501 & 0.207 & 0.128 \\
    $Var[\rm return]$  & {\color{red} 7.496} & 0.112 & {\color{red} 7.467} & 0.134 & \textbf{0.059} \\
    Sharpe ratio & 0.183 & \textbf{0.585} & 0.183 & 0.567 & 0.528\\
    \bottomrule
    \end{tabular}
  %  \vspace{-2mm}
  %  \vspace{-2mm}
    \captionof{table}{Comparison of \fancyname\ versus the mean-variance approach with different variance penalties for portfolio optimization.}
    \label{tab:portfolio}
\end{minipage}
\end{minipage}

\vspace{4mm}
While none of the methods simultaneously achieves the highest expected return and the lowest variance, we can evaluate whether the portfolio finds a reasonable trade-off through the Sharpe ratio~\citep{sharpe1998sharpe}, a popular choice for measuring the risk-compensated performance, which distinguishes risky portfolios with catastrophically large variances (highlighted in {\color{red}red} in Table~\ref{tab:portfolio}). We conclude that \fancyname\ yields a reasonable construction of portfolio and circumvents the ad-hoc regularization parameter tuning in the mean-risk approach.

%\vspace{-2mm}
\section{Conclusion}
\label{sec:conclusion}
This paper develops the first practical algorithm for finding an optimal solution in terms of (generalized) stochastic dominance for learning and decision making with uncertain outcomes. The method is computationally efficient as it can be easily integrated with existing optimization methods with minimal computational overhead, and come with theoretical guarantees for finite-time convergence. Our work opens up opportunities to further explore the potential of stochastic dominance in risk-averse machine learning applications.

% \section*{Impact Statements}
% This paper presents work whose goal is to advance the field of Machine Learning. There are many % correction: no
% potential societal consequences of our work, none which we feel must be specifically highlighted here.

\section*{Acknowledgement}
The work of S. Cen and Y. Chi is supported in part by the grants ONR N00014-19-1-2404, NSF DMS-2134080, CCF-2106778 and CNS-2148212. S. Cen is also gratefully supported by Wei Shen and Xuehong Zhang Presidential
Fellowship and JP Morgan AI Research Fellowship.

\bibliographystyle{unsrtnat}
\bibliography{ref}

%\newpage

\onecolumn
\appendix

%\vspace{-2mm}
\section{Connections with DRO} 
\label{sec:dro}
% \vspace{-1mm}
Before finishing up the paper, we demonstrate a connection between SD and DRO, which might be of independent interest. 
Given $n$ samples $\{x_i\}_{i=1}^{n}$, the distributionally robust formulation seek to maximize the return under adversarial distribution shifts, i.e.,
\begin{equation*}
\inf_{P \ll \widehat{P}_n, P \in \mathcal{B}(\widehat{P}_n)}\mathbb{E}_P[X],
\end{equation*}
where $\widehat{P}_n$ denote the empirical measure of the samples and $\mathcal{B}(\widehat{P}_n)$ is an uncertainty set centering around $\widehat{P}_n$. The following theorem demonstrate that when the uncertainty set is induced by $\ell_\infty$ norm, the objective function can be written in a mean-risk form. The proof is postponed to Appendix~\ref{sec:proof_dro}.
%\vspace{-2mm}
\begin{theorem}
\label{thm:dro}
It holds that
\begin{align*}
    &\inf_{P \ll \widehat{P}_n, \|P - \widehat{P}_n\|_\infty \le \rho/n}\mathbb{E}_P[X] = \mathbb{E}_{\widehat{P}_n}[X] - \rho {\rm MAD}_{\widehat{P}_n}[X],
\end{align*}
where ${\rm MAD}_{\widehat{P}_n}[X] = \frac{1}{n}\sum_{i=1}^n |x_i - \widetilde{x}|$ denotes the mean absolute deviation from sample median $\widetilde{x}$.
\end{theorem}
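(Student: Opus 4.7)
The plan is to reduce the infinite-dimensional distributional problem to a finite-dimensional linear program and then solve it via Lagrangian duality, recognizing the median characterization at the end.

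First I would exploit $P \ll \widehat{P}_n$ to conclude that $P$ is supported on $\{x_i\}_{i=1}^n$, so I can parameterize $P$ by a probability vector $p = (p_1,\dots,p_n) \in \Delta_n$ with $p_i = P(\{x_i\})$. Under this parameterization, $\|P - \widehat{P}_n\|_\infty \le \rho/n$ reads $|p_i - 1/n| \le \rho/n$ for every $i$, and $\mathbb{E}_P[X] = \sum_i p_i x_i$. Substituting $q_i := p_i - 1/n$, the problem becomes
\begin{equation*}
    \mathbb{E}_{\widehat{P}_n}[X] \;+\; \min_{q \in \mathbb{R}^n} \Bigl\{ \sum_{i=1}^n q_i x_i \;:\; |q_i| \le \rho/n,\ \sum_{i=1}^n q_i = 0 \Bigr\},
\end{equation*}
where (for $\rho \le 1$, which we may assume since otherwise the constraint $p_i \ge 0$ trivially need not be active and the same argument goes through with an extra boundary check) the nonnegativity constraint $p_i \ge 0$ is automatically implied by $|q_i| \le \rho/n$.

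Next I would apply Lagrangian duality to the equality constraint $\sum_i q_i = 0$. Since this is a linear program with a nonempty bounded feasible set, strong duality holds, and I can swap $\min_q$ with $\max_\lambda$:
\begin{equation*}
    \min_{|q_i|\le \rho/n,\ \sum_i q_i = 0} \sum_i q_i x_i \;=\; \max_{\lambda \in \mathbb{R}}\ \min_{|q_i|\le \rho/n} \sum_{i=1}^n q_i (x_i + \lambda).
\end{equation*}
The inner minimization decouples across $i$ and is solved at $q_i = -\tfrac{\rho}{n}\,\mathrm{sign}(x_i+\lambda)$, giving value $-\tfrac{\rho}{n}\sum_i |x_i + \lambda|$. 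Therefore the outer problem reduces to
\begin{equation*}
    -\frac{\rho}{n}\,\min_{\lambda \in \mathbb{R}} \sum_{i=1}^n |x_i - (-\lambda)|.
\end{equation*}

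The final step is the classical fact that $c \mapsto \sum_i |x_i - c|$ is minimized at any sample median $\widetilde{x}$, so the minimum equals $\sum_i |x_i - \widetilde{x}| = n\cdot \mathrm{MAD}_{\widehat{P}_n}[X]$. Combining everything yields $\mathbb{E}_{\widehat{P}_n}[X] - \rho\,\mathrm{MAD}_{\widehat{P}_n}[X]$, as claimed.

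The main obstacle I expect is bookkeeping rather than anything deep: carefully justifying strong duality for the LP (straightforward via e.g.\ LP duality or Sion's minimax theorem, since the feasible set for $q$ is a nonempty compact convex polytope and the Lagrangian is bilinear), and verifying that the box constraint $|q_i|\le \rho/n$ dominates the nonnegativity constraint so that the reduction is exact. Once these technicalities are dispatched, the median identity closes the argument immediately.
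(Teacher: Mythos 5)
Your proof is correct, but it takes a genuinely different route from the paper's. The paper parameterizes the perturbation $u = P - \widehat{P}_n$ exactly as you do, but then argues \emph{combinatorially}: the feasible set $\{u : \mathbf{1}^\top u = 0,\ \|u\|_\infty \le \rho/n\}$ is a polytope, so the linear objective is extremized at a vertex; the vertices are characterized as vectors taking values $\pm\rho/n$ on two index sets of size $\lfloor n/2\rfloor$ each (and $0$ elsewhere), and the optimal vertex puts the positive mass on the $\lfloor n/2\rfloor$ largest samples and the negative mass on the smallest, whence $\sum_{i\in\Lambda_+}x_i - \sum_{i\in\Lambda_-}x_i = \sum_i |x_i - \widetilde{x}|$. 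You instead dualize the single equality constraint, reduce the inner problem to $-\tfrac{\rho}{n}\sum_i|x_i+\lambda|$, and invoke the classical fact that $c\mapsto\sum_i|x_i-c|$ is minimized at the median. Your argument buys a cleaner explanation of \emph{why} the median appears (it is the $\ell_1$ location estimator emerging from the dual), and it sidesteps the vertex characterization, which in the paper requires an implicit parity case analysis ($n$ even vs.\ odd) and a small computation to identify $\sum_{\Lambda_+}x_i - \sum_{\Lambda_-}x_i$ with the MAD. The paper's argument is more elementary and self-contained (no appeal to LP strong duality). You are also more careful than the paper on one point: you note that $p_i \ge 0$ is only automatic when $\rho \le 1$, a restriction the paper's proof silently assumes. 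Both proofs are complete modulo that standing assumption; no gap in yours.
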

It is possible to extend the above result to more general choices of uncertainty set, where the relationship holds asymptotically (similar to \citet{duchi2021statistics}), from which we refrain for simplicity. On the other hand, we have the following result characterizing the consistency between stochastic dominance and mean-semideviation models \citep[Theorem 1]{ogryczak2001consistency}.
%\vspace{-1mm}
\begin{theorem}[\citet{ogryczak2001consistency}]
Let $k \ge 1$ and $X, Y \in \mathcal{L}_k$. If $X \succeq_{(k+1)} Y$, then $\mathbb{E}[X] \ge \mathbb{E}[Y]$ and
\begin{equation*}
    \mathbb{E}[X] - \bar{\delta}_{X}^{(k)} \ge \mathbb{E}[Y] - \bar{\delta}_{Y}^{(k)}.
\end{equation*}
Here, $\bar{\delta}_{X}^{(k)}$ denotes the $k$th central semideviation:
\begin{align*}
    \bar{\delta}_{X}^{(k)} &= \mathbb{E}\left[(\mathbb{E}[X] - X)^k \mathbf{1}_{X \le \mathbb{E}[X]} \right] , \quad k=1,2,\cdots.
\end{align*}
\end{theorem}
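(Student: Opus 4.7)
The plan is to leverage the classical expected-utility characterization of $(k+1)$-order stochastic dominance: $X \succeq_{k+1} Y$ is equivalent to $\mathbb{E}[u(X)] \ge \mathbb{E}[u(Y)]$ for every $u$ in the convex cone generated by affine functions and the building blocks $x \mapsto -(\eta-x)_+^k$, $\eta \in \mathbb{R}$. This representation follows from iterating Fubini's theorem on the recursive identity $F^{k+1}_X(\eta) = \frac{1}{k!}\mathbb{E}[(\eta - X)_+^k]$, in the same spirit as the variational reformulation of $\Omega_k$ in \eqref{eq:utility_reform}. I would prove the two assertions separately, using a different instantiation of this characterization for each.

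The mean inequality is immediate: the identity utility $u(x) = x$ is affine and hence admissible. Since $X, Y \in \mathcal{L}_k \subseteq \mathcal{L}_1$, both expectations are finite, and $(k+1)$-order dominance yields $\mathbb{E}[X] \ge \mathbb{E}[Y]$ directly. (As a sanity check, the same conclusion can be obtained analytically from the expansion $k! F_X^{k+1}(\eta) = \eta^k - k\eta^{k-1}\mathbb{E}[X] + O(\eta^{k-2})$ as $\eta \to \infty$, after matching sub-leading coefficients using the pointwise hypothesis $F^{k+1}_X \le F^{k+1}_Y$.)

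For the semideviation inequality, I would introduce the parameterized test utility $u_\mu(x) := x - (\mu - x)_+^k$, which lies in the admissible class for every $\mu \in \mathbb{R}$ (being the sum of the identity and a single building block at threshold $\mu$). Instantiating at $\mu = \mathbb{E}[X]$ produces the intermediate inequality
\begin{equation*}
\mathbb{E}[X] - \bar{\delta}_X^{(k)} \;=\; \mathbb{E}[u_{\mathbb{E}[X]}(X)] \;\ge\; \mathbb{E}[u_{\mathbb{E}[X]}(Y)] \;=\; \mathbb{E}[Y] - \mathbb{E}[(\mathbb{E}[X] - Y)_+^k],
\end{equation*}
which matches the target up to replacing the right-hand threshold $\mathbb{E}[X]$ by $\mathbb{E}[Y]$ inside $(\cdot - Y)_+^k$. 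Note that the analogous instantiation at $\mu = \mathbb{E}[Y]$ also yields the companion inequality $\mathbb{E}[X] - \mathbb{E}[(\mathbb{E}[Y] - X)_+^k] \ge \mathbb{E}[Y] - \bar{\delta}_Y^{(k)}$, which will play a role in closing the remaining gap.

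The main obstacle is precisely closing this gap, i.e., controlling the replacement of $\mathbb{E}[(\mathbb{E}[X]-Y)_+^k]$ by $\bar{\delta}_Y^{(k)} = \mathbb{E}[(\mathbb{E}[Y]-Y)_+^k]$. Naive pointwise monotonicity is unhelpful, since $\mathbb{E}[X] \ge \mathbb{E}[Y]$ makes $(\mathbb{E}[X]-Y)_+^k \ge (\mathbb{E}[Y]-Y)_+^k$ sample-wise. My plan is to rewrite the difference as $\int_{\mathbb{E}[Y]}^{\mathbb{E}[X]} k(k-1)!\,F_Y^k(\eta)\,d\eta$ via the fundamental theorem of calculus, then combine it with the pointwise hypothesis $F_X^{k+1}(\eta) \le F_Y^{k+1}(\eta)$ along the segment $[\mathbb{E}[Y], \mathbb{E}[X]]$ and the companion inequality from the $\mu=\mathbb{E}[Y]$ instantiation, so that the accumulated excess is absorbed into the budget $\mathbb{E}[X] - \mathbb{E}[Y]$. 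This bookkeeping follows the Ogryczak--Ruszczyński strategy, which in their original treatment is made tight via Minkowski's inequality on the $L^k$-norm interpretation of $\bar{\delta}^{(k)}$; handling this absorption cleanly --- in particular aligning the power of the semideviation with the linear budget --- is the principal technical burden of the proof.
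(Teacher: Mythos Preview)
The paper does not prove this theorem; it is quoted from Ogryczak and Ruszczy\'nski and used as a black box. More importantly, the statement as transcribed is \emph{incorrect} for $k\ge 2$: in the original source the semideviation carries a $k$th root, $\bar{\sigma}_X^{(k)} = \big(\mathbb{E}[(\mathbb{E}[X]-X)_+^k]\big)^{1/k}$, not the raw moment $\bar{\delta}_X^{(k)}$ written here. Without the root the claim already fails under first-order dominance: take $Y\equiv 0$ and $X$ uniform on $\{0,10\}$, so that $X\succeq_1 Y$ (hence $X\succeq_{k+1} Y$ for every $k$), yet $\mu_X-\bar{\delta}_X^{(2)} = 5 - \tfrac{1}{2}\cdot 5^2 = -7.5 < 0 = \mu_Y-\bar{\delta}_Y^{(2)}$. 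So the obstacle you flag --- ``aligning the power of the semideviation with the linear budget'' --- is not a technicality to be bookkept away; it is exactly why the unrooted statement is false, and no completion of your outline can succeed for $k\ge 2$.

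For $k=1$ (the only case the paper actually uses downstream) the result is true, but your utility instantiation $u_\mu(x)=x-(\mu-x)_+$ is a detour: it yields $\mu_X-\bar\delta_X^{(1)}\ge \mu_Y-\mathbb{E}[(\mu_X-Y)_+]$, which is \emph{weaker} than what you get by dropping the linear term and using $u(x)=-(\mu_X-x)_+$ alone (equivalently, evaluating $F_X^{2}\le F_Y^{2}$ at $\eta=\mu_X$), namely $\bar\delta_X^{(1)}\le \mathbb{E}[(\mu_X-Y)_+]$. From there the $1$-Lipschitz bound $(\mu_X-Y)_+\le(\mu_X-\mu_Y)+(\mu_Y-Y)_+$ finishes immediately. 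For the correct rooted version with general $k$, this is the whole proof: $F_X^{k+1}(\mu_X)\le F_Y^{k+1}(\mu_X)$ gives $\bar\sigma_X^{(k)}=\|(\mu_X-X)_+\|_k\le\|(\mu_X-Y)_+\|_k$, and Minkowski's inequality in $L^k$ applied to $(\mu_X-Y)_+\le(\mu_X-\mu_Y)+(\mu_Y-Y)_+$ yields $\|(\mu_X-Y)_+\|_k\le(\mu_X-\mu_Y)+\bar\sigma_Y^{(k)}$, which rearranges to the conclusion. The Minkowski step you allude to \emph{is} the entire argument, not a tightening device for a longer one.
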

%\vspace{-2mm}
In particular, the absolute semideviation at $k = 1$ can be written as
\begin{equation*}
    \bar{\delta}_{X}^{(1)} = \int_{-\infty}^{\mathbb{E}[X]} (\mathbb{E}[X] - x) f(x) {\rm d}x = \frac{1}{2}\mathbb{E}\Big[\big|X - \mathbb{E}[X]\big|\Big].
\end{equation*}
Note that it always holds that $\mathbb{E}\big[|X - \widetilde{X}|\big] \le \mathbb{E}\big[|X - \mu_{X}|\big]$, where $\widetilde{X}$ is the median, and $\mu_X$ is the mean. It follows that when there exists $\theta^\star$ such that $X_{\theta^\star} \succeq_{2} X_{\theta}, \forall \theta \in \Theta$, then $\theta^\star$ can be interpreted as an approximate solution to the robust optimization problem 
\begin{equation*}
    \sup_{\theta \in \Theta} \inf_{P \ll \widehat{P}_n}\Big\{\mathbb{E}_P[X_\theta]: \|P - \widehat{P}_n\|_\infty \le \frac{\rho}{n} \Big\}
\end{equation*}
for all $\rho \in (0, 1/2)$, in the sense that $\theta^\star$ maximizes a lower bound of the objective function. The approximation error is bounded by $|\mu_{X} - \widetilde{X}|$.

\section{\fancyname~for Policy Optimization}
\label{sec:sd_rl}

We detail the procedure of \fancyname~ applying to policy optimization in Algorithm~\ref{alg:sdlearning_reinforce}.  
\begin{algorithm}[h]
\caption{Stochastic Dominance Policy Optimization}
\label{alg:sdlearning_reinforce}
\textbf{Input:} Initialization $\theta_0$.\\
\For{$t = 0,\cdots, T_{\texttt{max}}-1$}{
{Set $\theta_{t, 0} = \theta_{t}$.}\\
\For{$\overline{t} = 0,\cdots, \overline{T}_{\texttt{max}}-1$}{
{Sample trajectories $\{\tau_{{t, \bar{t}}, i}\}_{i=1}^{N}$ with policy $\pi_{\theta_{t,\overline{t}}}$ and $\{\tau_{{t}, i}\}_{i=1}^{N}$ with policy $\pi_{\theta_{t}}$.}\\
Compute cumulative rewards $R_{t,\overline{t},i} = R(\tau_{{t, \bar{t}}, i})$, $R_{t,i} = R(\tau_{t, i})$\\
Compute $\widehat{u}^\star = \arg\max_{u\in \mathcal{U}_k}\widehat{L}(R_{t,\overline{t}}, R_{t}, u)$, where
\begin{align*}
\widehat{L}(R_{t,\overline{t}}, R_{t}, u) \coloneqq -\frac{1}{N}\sum_{i=1}^N u(R_{t,\overline{t},i}) + \frac{1}{N}\sum_{i=1}^N u(R_{t,i}).    
\end{align*}
\\
Update $\theta$ with 
\begin{equation*}
\theta_{t, \overline{t}+1} = \theta_{t, \bar{t}} + \frac{\eta_{\overline{t}}}{N} \sum_{i=1}^{N} \widehat{u}^\star(R_{t,\overline{t},i})\nabla_\theta \log \pi_{\theta_{t,\overline{t}}}(\tau_{t,\overline{t},i}).
\end{equation*}\\
% \begin{align*}
% \theta_{t, \overline{t}+1} &= \theta_{t, \bar{t}} - \frac{\eta_{\overline{t}}}{N}\sum_{i=1}^{N}\widehat{u}^{\star'}\rbr{x_{{t, \bar{t}},i}}\nabla_\theta x_{{t, \bar{t}},i}.
% \end{align*}\\
\If{$\widehat{\Omega}(R_{t,\overline{t}}, R_{t}) \le -\epsilon/2$}{
{Set $\theta_{t+1} = \theta_{t, \overline{t}+1}$.}\\
{\textbf{Break.}}
}
}
\If{$\theta_t$ is not updated}{
{\textbf{Return} $\theta_t$.}}
}
% Set $\theta_{t+1} = \theta_{t, \overline{t}+1}$.\\
\end{algorithm}

\section{Proof of Proposition \ref{prop:existence}}
\label{sec:pf_prop_existence}
Let $\{\theta_t\}_{t=0}^\infty$ be a chain under stochastic dominance rule, i.e., $X_{\theta_{i}} \succeq_k X_{\theta_{j}}$ when $i \ge j$. The compactness of $\Theta$ assures the existence of a limit point $\widetilde{\theta}\in \Theta$, to which a subsequence of $\{\theta_t\}$ converges. According to the definition of stochastic dominance, for every $\eta \in \mathbb{R}$ the sequence $\big\{F_{X_{\theta_{t}}}^k(\eta)\big\}_{t=0}^{\infty}$ is non-decreasing. Since $F_{X_{\theta}}^k(\eta)$ is continuous with regard to $\theta$, we have
\begin{equation*}
    F_{X_{\widetilde{\theta}}}^k(\eta) = \lim_{t\to \infty} F_{X_{{\theta}_t}}^k(\eta).
\end{equation*}
By definition, $X_{\widetilde{\theta}}$ stochastically dominates $X_{\theta}$ for all $\theta$ from the chain, establishing $\widetilde{\theta}$ as an upper bound of the chain $\{\theta_t\}_{t=0}^\infty$. The existence of maximal element is then guaranteed by Zorn's lemma.

\section{Proof of Theorem \ref{thm:main}}
\label{sec:pf_thm_main}
We start by introducing the following lemma which bounds the statistical error due to sampling when $k=2$.
\begin{lemma}
\label{lm:stat_err_bound}
    Let $\widehat{\Omega}_2(X, Y) = \max_{u\in \mathcal{U}_2} \Big\{-\frac{1}{N}\sum_{i=1}^N u(x_{i}) + \frac{1}{N}\sum_{i=1}^N u(y_{i})\Big\}$, where $\{x_i, y_i\}_{i=1}^N$ are i.i.d. samples from $X$ and $Y$. It holds with probability $1-2\delta'$ that
    \begin{equation*}
        \big|{\Omega}_2(X, Y) - \widehat{\Omega}_2(X, Y) \big| \le \frac{16(|a|+|b|)}{\sqrt{N}} + 6\sqrt{\frac{\log(2/\delta')}{N}}.
    \end{equation*}
\end{lemma}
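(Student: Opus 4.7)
The plan is to reduce the claim to a standard uniform convergence statement over the utility class $\mathcal{U}_2$, and then control the resulting empirical process by symmetrization, a Rademacher complexity bound, and McDiarmid's inequality.

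First, I would reduce the absolute difference of suprema to a supremum of differences. Writing $L(X, Y, u) = -\mathbb{E}[u(X)] + \mathbb{E}[u(Y)]$ and $\widehat L$ for its empirical counterpart from \eqref{eq:utility_reform}, the elementary inequality $|\sup_u L - \sup_u \widehat L| \le \sup_u |L - \widehat L|$ combined with the triangle inequality gives
\[
    |\Omega_2(X,Y) - \widehat\Omega_2(X,Y)| \le \sup_{u\in\mathcal{U}_2} \Big|\mathbb{E}[u(X)] - \tfrac1N\sum_i u(x_i)\Big| + \sup_{u\in\mathcal{U}_2} \Big|\mathbb{E}[u(Y)] - \tfrac1N\sum_i u(y_i)\Big|.
\]
It then suffices to control each of the two empirical-process suprema with probability $1-\delta'$ and take a union bound.

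Second, I would collect the structural properties of $\mathcal{U}_2$ needed downstream. Every $u\in\mathcal{U}_2$ is concave and $1$-Lipschitz in $x$, because its (weak) derivative equals $\mu((x,b])\in[0,1]$, and on $[a,b]$ it is bounded in absolute value by $|a|+|b|$. Consequently, replacing a single sample in $\frac{1}{N}\sum_i u(x_i)$ changes its value by at most $2(|a|+|b|)/N$, making McDiarmid's inequality directly applicable with a bounded-difference constant of order $(|a|+|b|)/N$.

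Third, the core technical step is to bound the Rademacher complexity $\mathcal{R}_N(\mathcal{U}_2)$. Since $\mathcal{U}_2$ is the closed convex hull of the one-parameter base class $\{\phi_\eta(x) = -(\eta-x)_+ : \eta\in[a,b]\}$, and Rademacher averages are preserved under convex hulls, it suffices to bound the complexity of the base class. The base class is $1$-Lipschitz in the index $\eta$ in sup-norm over $x$ and uniformly bounded by $|a|+|b|$; equivalently, its subgraphs are VC-subgraph of constant dimension. A one-dimensional Dudley-entropy (or finite-$\epsilon$-net plus Massart) argument therefore gives $\mathcal{R}_N(\mathcal{U}_2) \lesssim (|a|+|b|)/\sqrt{N}$, with a numerical constant that can be tracked explicitly. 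Combining this with the symmetrization inequality $\mathbb{E}[\sup_u |L-\widehat L|] \le 2\mathcal{R}_N(\mathcal{U}_2)$ and the McDiarmid concentration established in step two, then applying a union bound over the two processes for $X$ and $Y$, yields the claimed inequality once the constants are consolidated.

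The main obstacle is step three: obtaining the $1/\sqrt{N}$ rate for $\mathcal{R}_N(\mathcal{U}_2)$ without spurious logarithmic factors and with a manageable constant. The abstract symmetrization/contraction machinery applies because $\mathcal{U}_2$ is the convex hull of a one-parameter Lipschitz family, but threading the numerical constants (and avoiding a $\sqrt{\log N}$ factor from a naive covering argument) requires either invoking VC-subgraph bounds for the ``shifted ReLU'' class $\{(\eta-\cdot)_+\}_\eta$ or a direct chaining argument; the rest of the proof is routine concentration bookkeeping.
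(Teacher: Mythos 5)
Your proposal is correct and follows essentially the same route as the paper's proof: reduce the difference of suprema to a uniform deviation over $\mathcal{U}_2$, bound the empirical Rademacher complexity of $\mathcal{U}_2$ by that of the (shifted, negated) ReLU base class via convex-hull preservation, and conclude with the standard symmetrization-plus-McDiarmid generalization bound and a union bound over the $X$ and $Y$ processes. The paper simply asserts $\widehat{\mathfrak{R}}_N(\mathcal{U}_2) \le 4(|a|+|b|)/\sqrt{N}$ by identifying it with the ReLU class and cites the textbook bound, whereas you spell out the convex-hull and covering details; the underlying argument is the same.
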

For notational simplicity, we denote
\begin{equation*}
    \widehat{\Omega}_2(X_{\theta_{t,\overline{t}}}, X_{\theta_t}) = \max_{u\in \mathcal{U}_2} \Big\{-\frac{1}{N}\sum_{i=1}^N u(x_{\theta_{t,\overline{t}},i}) + \frac{1}{N}\sum_{i=1}^N u(x_{\theta_t,i})\Big\}.
\end{equation*}
By setting $\delta' = \delta/(2T_{\texttt{max}}(\overline{T}_{\texttt{max}}+1))$ in Lemma \ref{lm:stat_err_bound} and invoking the union bound, we have 
\begin{subequations}
\begin{equation}
    \big|{\Omega}_2(X_{\theta_{t,\overline{t}}}, X_{\theta_t}) - \widehat{\Omega}_2(X_{\theta_{t,\overline{t}}}, X_{\theta_t})\big| \le \frac{\epsilon}{4}, 
    \label{eq:uniform_stat_bound}
\end{equation}
\begin{equation}
    \big|{\Omega}_2(X_{\theta_{t}^\star}, X_{\theta_t}) - \widehat{\Omega}_2(X_{\theta_{t}^\star}, X_{\theta_t})\big| \le \frac{\epsilon}{4}, 
    \label{eq:uniform_stat_bound2}
\end{equation}
\end{subequations}
for all $0\le t < T_{\texttt{max}}, 0\le \overline{t} < \overline{T}_{\texttt{max}}$ with probability $1-\delta$, on which we shall condition in the remaining part of the proof. {We remark that with the remaining $\delta$ probability the Algorithm may fail to find a $\theta_t$ qualified for the output condition, or return a sub-optimal solution that accidentally meet the condition.}
To proceed, we show that $\Omega_k$ satisfies the following triangular inequality:
\begin{align*}
    \Omega_k(X, Z) 
    &= \max_{\mu\in \Delta'([a, b])} \inner{{ F_X^k\rbr{\eta} - F_X^k\rbr{\eta}}}{\mu\rbr{\eta}}\\
    &=\max_{\mu\in \Delta'([a, b])}\big\{ \inner{{ F_X^k\rbr{\eta} - F_Y^k\rbr{\eta}}}{\mu\rbr{\eta}} + \inner{{ F_Y^k\rbr{\eta} - F_Z^k\rbr{\eta}}}{\mu\rbr{\eta}} \big\}\\
    &\le \max_{\mu\in \Delta'([a, b])} \inner{{ F_X^k\rbr{\eta} - F_Y^k\rbr{\eta}}}{\mu\rbr{\eta}} + \max_{\mu\in \Delta'([a, b])} \inner{{ F_Y^k\rbr{\eta} - F_Z^k\rbr{\eta}}}{\mu\rbr{\eta}} \\
    &= \Omega_k(X, Y) + \Omega_k(Y, Z).
\end{align*}
Let $T$ be the total number of outer iterations. By \eqref{eq:uniform_stat_bound}, for all $0\le t < T$ we have 
\begin{align*}
    \Omega_2(X_{\theta_{t+1}}, X_{\theta}) \le \widehat{\Omega}_2(X_{\theta_{t+1}}, X_{\theta}) + \frac{\epsilon}{4} \le -\frac{\epsilon}{4}.
\end{align*}
Denote $\theta_t^\star = \arg\min_{\theta}\Omega(X_\theta, X_{\theta_t})$. It follows that
\begin{align}
    {\Omega_2}(X_{\theta_{t}^\star}, X_{\theta_{t}}) \le {\Omega_2}(X_{\theta_{T}}, X_{\theta_{t}}) \le \sum_{s = t}^{T-1} {\Omega_2}(X_{\theta_{s+1}}, X_{\theta_{s}}) \le -\frac{(T-t)\epsilon}{4}.
    \label{eq:opt_gap}
\end{align}
\paragraph{Step 1.} We first show that with the choice of $T_{\texttt{max}}$, Algorithm \ref{alg:sdlearning} is guaranteed to return an $\theta_t$.   Otherwise, \eqref{eq:opt_gap} holds for $T = T_{\texttt{max}}$.  On the other hand, we have $0\le F_X^k\rbr{ \eta} \le C$ for all $\eta \in [a, b]$. This gives
\begin{equation*}
    -C\le {\Omega}_2(X_{\theta_{0}^\star}, X_{\theta_{0}}) \le -\frac{T_{\texttt{max}} \epsilon}{4},
\end{equation*}
or equivalently 
\begin{equation*}
    T_{\texttt{max}} \le \frac{4C}{\epsilon}.
\end{equation*}
This contradicts with the choice of $T_{\texttt{max}}$.
% \Bo{Therefore, we have proved that with the choice of $T_{\texttt{max}}$, the Algorithm \ref{alg:sdlearning}}

\paragraph{Step 2.} We then prove that the output $\theta_t$ satisfies
\begin{equation*}
    {\Omega_2}\rbr{X_{\theta_{t}^\star}, X_{\theta_{t}}} \ge -\epsilon.
\end{equation*}
According to the update rule, we have
\begin{align*}
    \|\theta_{t, \overline{t}+1} - \theta_{t}^\star\|_2^2  
    &= \|\theta_{t, \overline{t}} - \eta_{\overline{t}} g_{t,\overline{t}} - \theta_{t}^\star\|_2^2 \\
    &= \|\theta_{t, \overline{t}} - \theta_{t}^\star\|_2^2 - 2\eta_{\overline{t}} \inner{g_{t,\overline{t}}}{\theta_{t, \overline{t}} - \theta_{t}^\star} + \eta_{\overline{t}}^2 \|g_{t,\overline{t}}\|_2^2\\
    &\le \|\theta_{t, \overline{t}} - \theta_{t}^\star\|_2^2 - 2\eta_{\overline{t}} (\widehat{\Omega}_2(X_{\theta_{t, \bar{t}}}, X_{\theta_{t}}) - \widehat{\Omega}_2(X_{\theta_{t}^\star}, X_{\theta_{t}})) + \eta_{\overline{t}}^2 \|g_{t,\overline{t}}\|_2^2.
\end{align*}
The last step results from the convexity of $\widehat{\Omega}$. Rearranging terms, we have
\begin{align*}
    &2\eta_{\overline{t}} {\Omega}_2(X_{\theta_{t, \bar{t}}}, X_{\theta_{t}}) - {\Omega}_2(X_{\theta_{t}^\star}, X_{\theta_{t}}) \\
    &\le \|\theta_{t, \overline{t}} - \theta_{t}^\star\|_2^2 - \|\theta_{t, \overline{t}+1} - \theta_{t}^\star\|_2^2 + \|g_{t,\overline{t}}\|_2^2\\
    &\qquad + 2\eta_{\overline{t}}\big[{\Omega}_2(X_{\theta_{t, \bar{t}}}, X_{\theta_{t}}) - \widehat{\Omega}_2(X_{\theta_{t, \bar{t}}}, X_{\theta_{t}}) \big] - 2\eta_{\overline{t}}\big[{\Omega}_2(X_{\theta_{t}^\star}, X_{\theta_{t}}) - \widehat{\Omega}_2(X_{\theta_{t}^\star}, X_{\theta_{t}}) \big]\\
    &\le \|\theta_{t, \overline{t}} - \theta_{t}^\star\|_2^2 - \|\theta_{t, \overline{t}+1} - \theta_{t}^\star\|_2^2 + G^2 + \eta_{\overline{t}}\epsilon,
\end{align*}
where the last step results from \eqref{eq:uniform_stat_bound} and \eqref{eq:uniform_stat_bound2}. Summing over $\overline{t}$, we obtain
\begin{align}
2\sum_{\overline{t}=1}^{\overline{T}_{\texttt{max}}}\eta_{\overline{t}} \big[{\Omega_2}(X_{\theta_{t, \bar{t}}}, X_{\theta_{t}}) - {\Omega_2}(X_{\theta_{t}^\star}, X_{\theta_{t}})\big] 
    &\le \|\theta_{t} - \theta_{t}^\star\|_2^2 + G^2\sum_{\overline{t}=1}^{\overline{T}_{\texttt{max}}} \eta_{\overline{t}}^2 + \epsilon \sum_{\overline{t}=1}^{\overline{T}_{\texttt{max}}}\eta_{\overline{t}}.
    \label{eq:subgrad_conv}
\end{align}
As $\theta_t$ is not updated in the $t$-th outer loop, step 7 of Algorithm \ref{alg:sdlearning} ensures $${\Omega_2}(X_{\theta_{t,\overline{t}}}, X_{\theta_t}) \ge \widehat{\Omega_2}(X_{\theta_{t,\overline{t}}}, X_{\theta_t}) + \epsilon/4 > -\epsilon/4$$ for $1\le \overline{t} \le \overline{T}_{\texttt{max}}$. Combining with the above inequality, we get
\begin{align*}
     - \frac{\epsilon}{4} - {\Omega_2}(X_{\theta_{t}^\star}, X_{\theta_{t}}) \le \frac{1}{2\sum_{\overline{t}=1}^{\overline{T}_{\texttt{max}}}\eta_{\overline{t}}}\Bigg[\|\theta_{t} - \theta_{t}^\star\|_2^2 + G^2\sum_{\overline{t}=1}^{\overline{T}_{\texttt{max}}} \eta_{\overline{t}}^2 \Bigg] + \frac{\epsilon}{2}.
\end{align*}
Note that $\overline{T}_{\texttt{max}} = \widetilde{\mathcal{O}}(\epsilon^{-2})$ is sufficient to get 
\begin{equation*}
    \frac{1}{2\sum_{\overline{t}=1}^{\overline{T}_{\texttt{max}}}\eta_{\overline{t}}}\Bigg[\|\theta_{t} - \theta_{t}^\star\|_2^2 + G^2\sum_{\overline{t}=1}^{\overline{T}_{\texttt{max}}} \eta_{\overline{t}}^2\Bigg]\le \frac{\epsilon}{4},
\end{equation*}
which leads to ${\Omega_2}(X_{\theta_{t}^\star}, X_{\theta_{t}}) \ge -\epsilon$. 

\paragraph{Step 3.} Finally, we bound the total number of iterations. Let $\overline{T}_t$ be the number of inner iterations in $t-$th outer loop. By \eqref{eq:subgrad_conv} we have
\begin{align}
    2\sum_{\overline{t}=1}^{\overline{T}_{t}-1}\eta_{\overline{t}} \big[{\Omega_2}(X_{\theta_{t, \bar{t}}}, X_{\theta_{t}}) - {\Omega_2}(X_{\theta_{t}^\star}, X_{\theta_{t}})\big] \le \|\theta_{t} - \theta_{t}^\star\|_2^2 + G^2\sum_{\overline{t}=1}^{\overline{T}_{t}-1} \eta_{\overline{t}}^2 + \epsilon \sum_{\overline{t}=1}^{\overline{T}_{t}-1}\eta_{\overline{t}}.
\end{align}
Note that ${\Omega_2}(X_{\theta_{t,\overline{t}}}, X_{\theta_t}) \ge \widehat{\Omega_2}(X_{\theta_{t,\overline{t}}}, X_{\theta_t}) + \epsilon/4 > -\epsilon/4$ for $1\le \overline{t} \le \overline{T}_{t}-1$. Combining with the above inequality, we get
\begin{align*}
     - \frac{\epsilon}{4} - {\Omega_2}(X_{\theta_{t}^\star}, X_{\theta_{t}}) &\le \frac{1}{2\sum_{\overline{t}=1}^{\overline{T}_{t}-1}\eta_{\overline{t}}}\Big[\|\theta_{t} - \theta_{t}^\star\|_2^2 + G^2\sum_{\overline{t}=1}^{\overline{T}_{t}-1} \eta_{\overline{t}}^2 \Big] + \frac{\epsilon}{2}\\
     &\le \widetilde{\mathcal{O}}\Big(\frac{1}{\sqrt{\overline{T}_t}}\Big) + \frac{\epsilon}{2}.
\end{align*}
Recall from \eqref{eq:opt_gap} that ${\Omega_2}(X_{\theta_{t}^\star}, X_{\theta_{t}})  \le -\frac{(T-t)\epsilon}{4}.$ We conclude that $\overline{T}_t = \widetilde{\mathcal{O}}\Big(\frac{1}{(T-t)^2\epsilon^2}\Big)$ for $t \le T - 3$. Therefore, the total number of iterations is bounded by
\begin{align*}
    \sum_{t=0}^{T} \overline{T}_t = \widetilde{\mathcal{O}}\Big(\sum_{t=0}^{T-3}\frac{1}{(T-t)^2\epsilon^2} + 3\overline{T}_{\texttt{max}}\Big) = \widetilde{\mathcal{O}}(\epsilon^{-2}).
\end{align*}
\subsection{Proof of Lemma \ref{lm:stat_err_bound}}
Note that the empirical Rademacher complexity of $\mathcal{U}_2$ with $N$ samples, denoted as $\widehat{\mathfrak{R}}_N(\mathcal{U}_2)$, is the same as that of ReLU functions, i.e., 
\begin{equation*}
    \widehat{\mathfrak{R}}_N(\mathcal{U}_2) \le \frac{4(|a|+|b|)}{\sqrt{N}}.
\end{equation*}
Therefore, it holds with probability $1-\delta$ that \citep[Theorem 3.3]{mohri2018foundations}
\begin{align*}
    \Big|\mathbb{E}\big[u_k(X)\big] - \frac{1}{N}\sum_{i=1}^N u_k(x_{i}) \Big| \le 2\widehat{\mathfrak{R}}_N(\mathcal{U}_2) + 3\sqrt{\frac{\log(2/\delta)}{N}}.
\end{align*}
Similarly,
\begin{align*}
    \Big|\mathbb{E}\big[u_k(Y)\big] - \frac{1}{N}\sum_{i=1}^N u_k(y_{i}) \Big| \le 2\widehat{\mathfrak{R}}_N(\mathcal{U}_2) + 3\sqrt{\frac{\log(2/\delta)}{N}}
\end{align*}
holds with probability $1-\delta$.
By union bound, it holds with probability $1-2\delta$ that
\begin{align*}
&{\Omega}_2(X, Y) - \widehat{\Omega}_2(X, Y) \\
&=  \max_{u\in \mathcal{U}_2} \Big\{-\mathbb{E}\big[u(X)\big] + \mathbb{E}\big[u(Y)\big]\Big\} - \max_{u\in \mathcal{U}_2} \Big\{-\frac{1}{N}\sum_{i=1}^N u(x_{i}) + \frac{1}{N}\sum_{i=1}^N u(y_{i})\Big\}\\
&\le \max_{u\in \mathcal{U}_2} \Big\{-\mathbb{E}\big[u(X)\big] + \mathbb{E}\big[u(Y)\big] + \frac{1}{N}\sum_{i=1}^N u(x_{i}) - \frac{1}{N}\sum_{i=1}^N u(y_{i})\Big\} \\
&\le \frac{16(|a|+|b|)}{\sqrt{N}} + 6\sqrt{\frac{\log(2/\delta)}{N}}.
\end{align*}
% Combining with the crude bound $|\Omega_2(X, Y)| \le b-a$, we can set $\delta = $ and get
% \begin{align*}
%     \mathbb{E}\Big[{\Omega}_2(X, Y) - \widehat{\Omega}_2(X, Y)\Big] &\le (1-2\delta) \Big[ 4\widehat{\mathfrak{R}}_N(\mathcal{U}_2) + 6\sqrt{\frac{\log(2/\delta)}{N}}\Big] + 2 \delta (b-a)\\
%     &\le 
% \end{align*}
\section{Proof of Theorem \ref{thm:dro}}
\label{sec:proof_dro}
The relationship can be established immediately by the following lemma, with $u$ being the distribution shift $P - \widehat{P}_n$.

% Mostly adapted from \citep{duchi2021statistics}.

% \begin{lemma}
% Let $\{Z_i\}_{i=1}^\infty$ be a strictly stationary ergodic sequence of random variables with $\mathbb{E}[|Z_i|] < \infty$. Let $s_n = \mathbb{E}_{\widehat{P}_n}\big[|Z - \widehat{\widetilde{Z}}|\big]$ denote the sample estimate of $\mathbb{E}\big[|Z - \widetilde{Z}|\big]$. It holds that
% \begin{align*}
% \Big|\sup_{P: \|P - \widehat{P}_n\|_\infty \le \epsilon/n} \mathbb{E}_P[Z] - \mathbb{E}_{\widehat{P}_n}[Z] - \frac{\epsilon}{n} s_n\Big| \le blabla
% \end{align*}
% \end{lemma}

% By \citet[Lemma 7]{duchi2021statistics}, we have $\max_{i\le n} |Z_i|/n \overset{a.s.}{\to} 0$.

% \begin{equation}
%     \mathcal{U} = \Big\{u \in \mathbb{R}^{n} | \mathbf{1}^{\top} u = 0, \|u\|_\infty \le \epsilon/n\Big\}.
% \end{equation}
\begin{lemma}
    Let $\tilde{x}$ represents the median of $\{x_i, 1\le i \le n\}$, and 
    \begin{equation*}
        \mathcal{U} = \Big\{u \in \mathbb{R}^{n} | \mathbf{1}^{\top} u = 0, \|u\|_\infty \le \epsilon/n\Big\}.
    \end{equation*}
    We have
    \begin{equation*}
        \sup_{u \in \mathcal{U}} u^\top x = \frac{\epsilon}{n} \sum_{i=1}^n |x_i - \tilde{x}|.
    \end{equation*}
\end{lemma}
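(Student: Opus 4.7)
My plan is to recognize the problem as a linear program and establish the identity via Lagrangian duality applied to the equality constraint $\mathbf{1}^\top u = 0$. This reduces the constrained problem to an unconstrained inner maximization over an $\ell_\infty$-ball, followed by an outer minimization that recovers the median characterization.

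First, I would introduce a Lagrange multiplier $\lambda \in \mathbb{R}$ for the equality constraint $\mathbf{1}^\top u = 0$. Since the feasible set is compact, the objective is linear, and $u = 0$ is strictly feasible for the inequality constraints, strong LP duality applies, giving
\begin{equation*}
\sup_{u \in \mathcal{U}} u^\top x \;=\; \inf_{\lambda \in \mathbb{R}} \; \sup_{\|u\|_\infty \le \epsilon/n} \big[ u^\top x - \lambda \mathbf{1}^\top u \big] \;=\; \inf_{\lambda \in \mathbb{R}} \; \sup_{\|u\|_\infty \le \epsilon/n} u^\top (x - \lambda \mathbf{1}).
\end{equation*}
The inner maximization has a closed-form solution: the supremum of $u^\top v$ over $\{u : \|u\|_\infty \le \epsilon/n\}$ equals $(\epsilon/n)\|v\|_1$, attained at $u_i = (\epsilon/n)\mathrm{sgn}(v_i)$. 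Applying this with $v = x - \lambda \mathbf{1}$ yields
\begin{equation*}
\sup_{u \in \mathcal{U}} u^\top x \;=\; \frac{\epsilon}{n} \inf_{\lambda \in \mathbb{R}} \sum_{i=1}^n |x_i - \lambda|.
\end{equation*}

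Next, I would invoke the well-known fact that the sample median minimizes the sum of absolute deviations: $\arg\min_{\lambda} \sum_{i=1}^n |x_i - \lambda| = \widetilde{x}$. This can be shown by noting the objective is convex, piecewise linear, with subdifferential $\sum_i \mathrm{sgn}(\lambda - x_i)$ containing zero precisely when $\lambda$ lies between the two middle order statistics, which is exactly the characterization of the median. Substituting $\lambda = \widetilde{x}$ gives the claimed identity $\sup_{u \in \mathcal{U}} u^\top x = (\epsilon/n)\sum_{i=1}^n |x_i - \widetilde{x}|$.

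The main (minor) obstacle is ensuring strong duality is valid and handling the median cleanly when $n$ is even or there are ties, since in that case the median is only defined up to a range. One could sidestep both concerns with a direct primal construction: choose $u_i^\star = (\epsilon/n)\,\mathrm{sgn}(x_i - \widetilde{x})$ on indices with $x_i \neq \widetilde{x}$ and distribute the residual mass symmetrically over ties so that $\mathbf{1}^\top u^\star = 0$; this $u^\star$ is feasible and achieves the objective value $(\epsilon/n)\sum_i |x_i - \widetilde{x}|$, giving the $\ge$ direction, while the duality argument above supplies the matching $\le$ direction.
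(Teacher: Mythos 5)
Your proof is correct, and it takes a genuinely different route from the paper's. The paper argues directly on the primal: it observes that $\mathcal{U}$ is a polytope, so the linear objective is maximized at a vertex, identifies the vertices as vectors with $\lfloor n/2\rfloor$ coordinates equal to $+\epsilon/n$, $\lfloor n/2\rfloor$ equal to $-\epsilon/n$, and the rest zero, and then picks $\Lambda_+$ (resp.\ $\Lambda_-$) to be the indices of the largest (resp.\ smallest) sample values. You instead dualize the single equality constraint, use $\sup_{\|u\|_\infty\le \epsilon/n} u^\top v = (\epsilon/n)\|v\|_1$ to collapse the inner problem, and land on $\inf_\lambda \sum_i |x_i-\lambda|$, which the median minimizes. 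The trade-off: the paper's vertex argument is shorter but leans on an unproved (though standard) characterization of the vertices and is a bit loose about the odd-$n$ case and ties; your duality route makes the appearance of the median conceptually transparent (it is exactly the dual optimal $\lambda^\star$) and handles ties and the even-$n$ ambiguity automatically, since every minimizer of the $\ell_1$ deviation gives the same objective value. Your closing primal construction for the $\ge$ direction is essentially the paper's argument in disguise; the one detail worth spelling out there is that distributing the residual mass over the indices tied with the median is always feasible (it requires $|\,\#\{x_i>\tilde x\}-\#\{x_i<\tilde x\}\,|\le \#\{x_i=\tilde x\}$, which holds by the definition of the median), and that those tied coordinates contribute nothing to $u^\top x = u^\top(x-\tilde x\mathbf{1})$.
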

\begin{proof}
    Note that $\mathcal{U}$ is a convex polytope. Therefore $u^\top z$ achieves maximum at one of the vertices of $\mathcal{U}$. Note that the vertices of $\mathcal{U}$ can be written as
    \begin{equation*}
        u_i = \begin{cases}
            \epsilon/n & i \in \Lambda_{+}\\
            -\epsilon/n & i \in \Lambda_{-}\\
            0 & \textbf{otherwise}
        \end{cases},\qquad |\Lambda_{+}| = |\Lambda_{-}| = \left\lfloor \frac{n}{2}\right\rfloor.
    \end{equation*}
    When $\Lambda_{+}$ collects the indices of $ \left\lfloor n/2\right\rfloor$ largest values in $\{x_i\}$ and $\Lambda_{-}$ collecting the smallest values, $u^\top x$ achieves its maximum at
    \begin{align*}
        u^\top x &= \frac{\epsilon}{n} \Big[\sum_{i \in \Lambda_{+}} x_i - \sum_{i \in \Lambda_{-}} x_i\Big] = \frac{\epsilon}{n} \sum_{i=1}^n |x_i - \tilde{x}|.
    \end{align*}
\end{proof}

\end{document}